%% file: main.tex
\documentclass[letterpaper]{article}

\usepackage[preprint]{aaai2027}
\usepackage[hyphens]{url}
\usepackage{graphicx}
\usepackage{natbib}
\usepackage{caption}
\usepackage{array}
\usepackage{booktabs}
\usepackage{amsmath,amssymb,amsthm}

\newcommand{\doop}{\operatorname{do}}

\newcommand{\De}{\operatorname{De}}
\DeclareMathOperator*{\argmax}{arg\,max}
\newtheorem{theorem}{Theorem}

\title{When Many Answers Are Valid, Voting Fails:\\
Symbolic Verification for Best-of-$K$ Causal Reasoning in LLMs}

\author{
Omatharv Vaidya\textsuperscript{\rm 1},
Connor Jerzak\textsuperscript{\rm 1},
Zayne Sprague\textsuperscript{\rm 2},
Fangcong Yin\textsuperscript{\rm 2},
Nhat Ho\textsuperscript{\rm 1}
}

\affiliations{
\textsuperscript{\rm 1}The University of Texas at Austin\\
\textsuperscript{\rm 2}New York University\\
vomatharv@utexas.edu, connor.jerzak@austin.utexas.edu,
zrs2020@nyu.edu, fy666@nyu.edu, minhnhat@utexas.edu
}

\begin{document}

\maketitle

\input{Chapters/0_Abstract}
\input{Chapters/1_Introduction}
\input{Chapters/2_Motivation}
\input{Chapters/3_Approach}
\input{Chapters/3a_Theory}
\input{Chapters/4_Empirical_study}
\input{Chapters/5_Conclusion}

\bibliography{aaai2027}

\end{document}


\maketitle

\section{Overview}
\label{sec:guide}

This supplement contains the proofs behind the main paper's two theorems, the full tables
behind its summary numbers, the checks that were run on the checker itself, and the
replication details.

We provide a few clarifications regarding the terminology used throughout. A candidate
pool is the ordered set of traces sampled for one problem. The maximum deployed score is
the ordinary six-component score used for selection. A strict ATE certificate is the
narrower object the soundness theorem covers. This adds treatment-exclusion and
decision-margin conditions to a maximum-score ATE trace.

Two conventions govern the result tables. Gains are computed from unrounded values. A
reported gain can sit 0.1 away from the difference of two rounded entries in its own row.
Accuracy figures are directly comparable only when they come from the same underlying
sample of model outputs, and each result is labeled with its source wherever the samples
differ.

\section{The CALVER Checker}
\label{sec:method-supp}

\subsection{Inputs and graph types}
A causal problem contains a typed graph $G$ and query $q$, or a text description $x$ from
which each trace constructs a graph $\widehat G$. The evaluated graph classes are DAGs,
acyclic directed mixed graphs (ADMGs), and directed graphs. DAG queries use
$d$-separation. ADMG queries preserve directed and bidirected edges and use
$m$-separation. Directed-cycle queries use reachability and explicit cycle witnesses
\citep{pearl2009causality,peters2017elements,richardson2003markov}. Edge type is preserved
from parsing through grading.

Correctness is query-specific. A find-one-valid answer is correct whenever it satisfies the declared graph predicate. A numeric answer must satisfy the declared numerical tolerance and the decision rule. In constructed-graph experiments, candidate $i$ is scored relative to its own $\widehat G_i$. The final accuracy is measured on a source structure held outside every selector. The K\&K experiment uses the same candidate-local design with a propositional formalization and truth-table engine in place of a causal graph \citep{xie2025memorization}.

\subsection{What CALVER checks}
Each trace is divided into six parts the checker reads separately:
\begin{enumerate}
\item \textbf{Graph:} the supplied graph, or a typed graph constructed from text.
\item \textbf{Query:} treatment, outcome, estimand, graph class, and the decision threshold.
\item \textbf{Strategy:} the proposed adjustment set, conditioning set, path witness, intervention, or formal assignment.
\item \textbf{Derivation record:} the declared operation and its structured arguments. This slot checks syntax, typing, and provenance, and answer soundness rests on the recomputed result rather than on any reading of the proof string.
\item \textbf{Computed result:} a graph predicate, recomputed functional, or formal consistency outcome.
\item \textbf{Answer:} the final object or decision, checked for agreement with the computed result.
\end{enumerate}
The checker returns bits $b_j(r;G,q)\in\{0,1\}$ and score
$S(r)=\sum_{j=1}^{6}b_j(r;G,q)$. We call $S(r)=6$ the \emph{maximum deployed score}.
The deployed selector returns the earliest trace attaining the maximum score. The tie
rule is fixed globally and stays the same across benchmarks. The maximum deployed score
is a ranking signal. The strict ATE certificate defined in the theory section below adds
the guards its soundness theorem requires.

\begin{algorithm}[t]
\caption{Candidate scoring and selection}
\label{alg:supp-calver}
\begin{algorithmic}[1]
\REQUIRE problem $p$, frozen traces $\{r_i\}_{i=1}^K$
\FOR{$i=1,2,3,\ldots,K$}
  \STATE parse exactly one instance of each typed slot
  \STATE bind graph class and query variables
  \STATE evaluate the task-specific strategy predicate
  \STATE check derivation typing and provenance
  \STATE recompute the result from the accepted strategy
  \STATE check the final answer using the task-specific consistency rule
  \STATE $s_i\gets\sum_{j=1}^{6}b_{ij}$
\ENDFOR
\STATE $i^*\gets\min\arg\max_i s_i$
\STATE \textbf{return} the answer in $r_{i^*}$
\end{algorithmic}
\end{algorithm}

\subsection{The check used for each task}
Table~\ref{tab:predicate-details} gives the operations used by the principal task
families. The checker is modular. Each task registers a parser, a semantic predicate, and
a result-to-answer consistency function. Unregistered tasks fail closed.

\begin{table*}[t]
\centering
\small
\setlength{\tabcolsep}{4.0pt}
\begin{tabular}{p{.16\textwidth}p{.17\textwidth}p{.27\textwidth}p{.30\textwidth}}
\toprule
Task & Strategy object & Structural test & Result and answer test \\
\midrule
DAG find-one-valid backdoor & set $Z$ &
reject $X,Y,\De(X)$; remove arrows out of $X$; test $X\perp_dY\mid Z$ &
return validity of $Z$ \\
Numeric ATE & set $Z$ and reported $\widehat\theta$ &
deployed score tests the backdoor separation condition; the strict audit also applies the exclusion guard &
recompute $\psi(P;X,Y,Z)$, test numerical tolerance, and check the threshold answer \\
ADMG adjustment / blocking & set $Z$ &
typed $m$-separation on the mixed graph &
return validity of $Z$ under the declared query \\
Conditional independence & conditioning set $Z$ &
$d$- or $m$-separation according to graph class &
answer must agree with the separation result \\
Mediator & node $M$ and path &
directed witness $X\leadsto M\leadsto Y$, excluding endpoint substitutions &
answer must agree with the verified witness \\
Intervention reachability & intervention node and path &
remove incoming edges to the intervention; verify reachability in the mutilated graph &
answer must agree with the post-intervention path predicate \\
Directed cycle & cycle sequence &
every consecutive edge exists and the sequence closes &
answer must agree with existence of the supplied witness \\
Knights and knaves & typed formula set and assignment &
evaluate every restated formula under the assignment and speaker type &
accept exactly when all constraints are jointly consistent \\
Do-calculus identification & proposed terminal expression &
independent prover establishes derivability of the candidate target &
select the earliest provable candidate, otherwise score the item incorrect; the prover
receives no reference target \\
\bottomrule
\end{tabular}
\caption{The semantic check used for each task. The final two rows apply the same
candidate-local selection principle with a truth-table engine and an independently
implemented do-calculus prover.}
\label{tab:predicate-details}
\end{table*}

The DoVerifier row uses candidate-target proof search rather than the reference target. It
is evaluated as an independent selector, not as part of the deployed CALVER score
\citep{shpitser2008complete,he2026doverifier}.

\subsection{A worked example with several valid answers}
Consider the DAG
\begin{equation}
U\to X,\quad U\to Y,\quad X\to M\to Y,\quad P\to Y,
\end{equation}
with query ``return one valid backdoor adjustment set for the effect of $X$ on $Y$.'' Both
$\{U\}$ and $\{U,P\}$ are valid. The empty set leaves $X\leftarrow U\to Y$ open, and
$\{M\}$ is invalid because $M\in\De(X)$. Suppose eight traces contain three copies of
$\{M\}$, two copies of $\{U\}$, two copies of $\{U,P\}$, and one empty set. Exact voting
selects $\{M\}$ even though four traces contain a valid object. Table~\ref{tab:worked}
illustrates the checker on one valid and one repeated invalid trace.

\begin{table}[t]
\centering
\small
\setlength{\tabcolsep}{3.2pt}
\begin{tabular}{lcc}
\toprule
Component & Valid trace & Invalid trace \\
\midrule
Proposed set & $\{U,P\}$ & $\{M\}$ \\
Graph parses and matches & 1 & 1 \\
Query binds $(X,Y)$ & 1 & 1 \\
Strategy predicate & 1 & 0 \\
Derivation typed & 1 & 1 \\
Result recomputes & 1 & 0 \\
Answer agrees & 1 & 0 \\
\midrule
Total & 6 & 3 \\
\bottomrule
\end{tabular}
\caption{Illustrative six-component scoring. Distinct valid sets are accepted without
requiring one canonical reference string.}
\label{tab:worked}
\end{table}

\subsection{What a selector receives}
Every comparison runs on the same normalized pool. Before any scorer sees it, the export
strips the fields that would give an answer away: benchmark reference answers,
per-candidate correctness flags, canonical adjustment sets, and anything kept only for
later analysis. The stripped pool is hashed, a scorer writes nothing but a candidate index
and a score, and the correctness fields are put back once every selector has returned its
index. An automated test inserts each of those fields into a comparator input and checks
that normalization takes it out again.

Several generation seeds can produce repeated observations of one underlying problem.
Confidence intervals therefore resample problem identities and retain all associated
seed observations in each bootstrap draw. This clustered paired design is used for the
main CLEAR, graph-construction, bnlearn, and same-pool comparisons. All methods in a
paired comparison receive the same candidate texts in the same order.

\subsection{What the checker sees and what grades it}
\label{subsec:supp-roles}
Each experiment states which formal object is available while a selector runs and which
object grades the answer it returns.

Supplied-graph CLEAR isolates aggregation. Every selector receives the same supplied graph
and the same frozen candidate texts, and both the checker and the grader evaluate the
published graph predicate instead of matching CLEAR's listed example. Fixing the criterion
on both sides is what makes the comparison clean, since every selector in
Table~\ref{tab:supp-comparator} then ranks identical candidates under one notion of
correctness and the spread between 42.1 and 23.5 measures the ranking rule. The predicate
also admits more objects than any single reference string, which is the property the
benchmark was chosen for. In the audited subset, 11 of 21 selected objects satisfy it while
differing from CLEAR's listed example.

The remaining settings separate the two objects. In the \texttt{bnlearn} prose ladder and
the graph-construction suite, candidate $i$ is scored only against the graph
$\widehat G_i$ it reconstructs, and correctness is measured afterwards against a source
graph $G^\star$ that no selector ever sees. In the K\&K study a truth-table engine checks an
assignment against the candidate's own restated formulas, while grading uses the
benchmark's unique solution. DoVerifier proves the candidate's own target expression
without the reference target and is graded against the benchmark estimand. The strict ATE
study recomputes the adjustment functional from the input-visible conditional probability
tables and compares the threshold decision with the exact effect. Every transfer claim in
this document rests on these four settings.

\subsection{Runtime}
The graph operations are polynomial in graph size. For the benchmark graphs, ancestral
restriction and moralized separation dominate the CPU cost. At $K=8$, the complete
symbolic scoring pass added roughly 7\% to the generation cost already incurred by
self-consistency in our environment, with individual traces scored in 1--8 ms on one CPU
thread. Learned reward models and language-model judges require an additional neural
forward pass per candidate.

\section{Theory and Proofs}
\label{sec:theory-supp}

\subsection{Conditions for the ATE guarantee}
The results below specialize standard causal-identification arguments to CALVER's
deployed six-component score, its explicit numerical tolerance, and its fixed first-index
selection rule. They complement general analyses of best-of-$N$ inference under imperfect
verifiers \citep{huang2025bestofn,dorner2026rocnreroll}.
Let $G=(V,E)$ be a causal DAG, let $P$ be the observational distribution induced by a
Markovian SCM compatible with $G$, and let $X,Y\in V$ be distinct binary variables. Let
$G_{\underline X}$ denote the graph obtained by deleting every arrow emanating from
$X$. Define
\begin{equation}
\begin{aligned}
\mathsf{BD}_{G}(X,Y,Z)
&=\ind\!\left\{Z\cap(\{X,Y\}\cup\De_G(X))=\varnothing\right\}\,\times\\
&\quad \ind\!\left\{X\perp_dY\mid Z\text{ in }G_{\underline X}\right\}
\end{aligned}
\label{eq:supp-bd}
\end{equation}
For binary $X$ and $Y$, define
\begin{equation}
\begin{aligned}
\psi(P;X,Y,Z)=\sum_z &\bigl[P(Y=1\mid X=1,z)\\
&-P(Y=1\mid X=0,z)\bigr]P(z)
\end{aligned}
\label{eq:supp-psi}
\end{equation}

\begin{assumption}[Causal and numerical setting]
\label{ass:supp-contract}
The supplied $(G,P)$ is the graph and observational distribution of the query SCM;
$P(x,z)>0$ for $x\in\{0,1\}$ and each $z$ with $P(z)>0$; and the true ATE is
\[
\theta=P(Y=1\mid\doop(X=1))-P(Y=1\mid\doop(X=0))
\]
The final answer is ``Yes'' exactly when $\theta>\tau$ for a fixed threshold $\tau$.
\end{assumption}

\begin{definition}[Maximum deployed ATE score]
\label{def:supp-maxscore}
An ATE trace attains the maximum deployed score when: (i) exactly one instance of every
typed slot is parsed; (ii) graph and query slots bind to the supplied instance; (iii) the
deployed strategy check accepts the proposed $Z$; (iv) the derivation slot applies the
adjustment operation to the same $Z$; (v) a finite reported value $\widehat\theta$
satisfies
\[
|\widehat\theta-\psi(P;X,Y,Z)|\leq\varepsilon
\]
with treatment assignments protected from overwrite by $Z$; and (vi) the explicit answer
is ``Yes'' iff $\widehat\theta>\tau$.
\end{definition}

\begin{definition}[Strict ATE certificate]
\label{def:supp-strict}
A maximum-score ATE trace is a strict ATE certificate when its proposed set satisfies
$\mathsf{BD}_{G}(X,Y,Z)=1$ and its recomputed effect obeys
\[
|\psi(P;X,Y,Z)-\tau|>\varepsilon
\]
The added conditions exclude the treatment, outcome, and descendants of the treatment
from $Z$, verify the complete backdoor criterion, and place the causal decision outside
the numerical tolerance band.
\end{definition}

\begin{lemma}[Strict backdoor identification]
\label{lem:supp-identification}
Under Assumption~\ref{ass:supp-contract}, if $\mathsf{BD}_{G}(X,Y,Z)=1$, then
$\psi(P;X,Y,Z)=\theta$.
\end{lemma}

\begin{proof}
Fix $x\in\{0,1\}$. By total probability under intervention,
\begin{equation}
\begin{aligned}
&P(Y=1\mid\doop(X=x))\\
&\quad=\sum_z P(Y=1\mid\doop(X=x),Z=z)\\
&\qquad\quad\times P(Z=z\mid\doop(X=x))
\end{aligned}
\label{eq:supp-total}
\end{equation}
Because $Z\cap\De_G(X)=\varnothing$, intervening on $X$ does not alter the distribution
of $Z$. Thus, by action deletion,
\begin{equation}
P(Z=z\mid\doop(X=x))=P(Z=z)
\label{eq:supp-zinv}
\end{equation}
The second clause of \eqref{eq:supp-bd} states that $X$ and $Y$ are $d$-separated by
$Z$ after deleting arrows emanating from $X$. This is the graphical condition for
action--observation exchange, giving
\begin{equation}
\begin{aligned}
&P(Y=1\mid\doop(X=x),Z=z)\\
&\qquad=P(Y=1\mid X=x,Z=z)
\end{aligned}
\label{eq:supp-exchange}
\end{equation}
Positivity ensures that the observational conditional on the right is defined for every
assignment with positive weight. Substituting \eqref{eq:supp-zinv} and
\eqref{eq:supp-exchange} into \eqref{eq:supp-total} yields
\[
P(Y=1\mid\doop(X=x))=\sum_z P(Y=1\mid X=x,z)P(z)
\]
Subtracting the equality for $x=0$ from the equality for $x=1$ gives exactly
\eqref{eq:supp-psi} and proves the claim.
\end{proof}

\begin{theorem}[Strict ATE certificate soundness]
\label{thm:supp-sound}
Suppose Assumption~\ref{ass:supp-contract} holds. Any trace satisfying
Definition~\ref{def:supp-strict} has the correct explicit Yes/No answer. The verification
procedure does not use a stored target label.
\end{theorem}

\begin{proof}
Definition~\ref{def:supp-strict} gives
$\mathsf{BD}_{G}(X,Y,Z)=1$. Lemma~\ref{lem:supp-identification} therefore yields
$\psi(P;X,Y,Z)=\theta$. Definition~\ref{def:supp-maxscore} gives
\begin{equation}
|\widehat\theta-\theta|
=|\widehat\theta-\psi(P;X,Y,Z)|
\leq\varepsilon
\label{eq:supp-numerr}
\end{equation}
The strict decision margin is
$|\psi(P;X,Y,Z)-\tau|=|\theta-\tau|>\varepsilon$.
If $\theta>\tau$, then $\theta-\varepsilon>\tau$, and
\eqref{eq:supp-numerr} implies $\widehat\theta>\tau$. If $\theta<\tau$, then
$\theta+\varepsilon<\tau$, and \eqref{eq:supp-numerr} implies
$\widehat\theta<\tau$. The answer component in
Definition~\ref{def:supp-maxscore} reports the corresponding threshold decision, so the
explicit answer is correct.
\end{proof}

\begin{corollary}[Alternative adjustment sets agree]
\label{cor:supp-alt}
Under Assumption~\ref{ass:supp-contract}, if $Z_1$ and $Z_2$ both satisfy
\eqref{eq:supp-bd}, then
\[
\psi(P;X,Y,Z_1)=\psi(P;X,Y,Z_2)=\theta
\]
\end{corollary}

\begin{proof}
Apply Lemma~\ref{lem:supp-identification} separately to $Z_1$ and $Z_2$.
\end{proof}

\subsection{Why the guards are necessary}
Both guards are required for the stated soundness result.
If the checker allowed $Z$ to contain $X$, a numerical implementation that forms the
two arms as dictionaries and then updates them with $z$ could overwrite both treatment
assignments with the same value. The two recomputed arms would become identical and the
reported effect would be zero even in a two-node SCM $X\to Y$ with a positive causal
effect. The strict audit rejects $X\in Z$, $Y\in Z$, and
$Z\cap\De(X)\neq\varnothing$ before applying the soundness claim.

Similarly, no tolerance-based certificate can determine a binary threshold decision
when the true value lies inside the numerical tolerance band. Suppose
$0<\theta-\tau<\varepsilon$ and a valid numerical result satisfies
$\widehat\theta=\theta-\varepsilon/2\leq\tau$. The numerical component passes, but the
threshold answer is reversed. Theorem~\ref{thm:supp-sound} therefore states the margin
condition explicitly, and the empirical calibration stratifies queries by
$|\theta-\tau|$.

\subsection{Validity fragmentation}
\begin{theorem}[Validity fragmentation]
\label{thm:supp-frag}
Let $A_1,A_2,A_3,\ldots,A_K$ be i.i.d. answers with mass function $\mu$ on finite
$\mathcal A$, and let $\mathcal V\subseteq\mathcal A$ be the valid answers. Put
$\mathcal V_+=\{v\in\mathcal V:\mu(v)>0\}$. If an invalid answer $w$ and
$\Delta>0$ satisfy
\[
\mu(w)\geq\mu(v)+\Delta\quad \forall v\in\mathcal V
\]
then for any deterministic plurality tie rule,
\begin{equation}
\Prb\{\plur(A_{1:K})\in\mathcal V\}
\leq |\mathcal V_+|e^{-K\Delta^2/2}
\label{eq:supp-frag-fail}
\end{equation}
Conversely, if a valid $v^\star$ satisfies
$\mu(v^\star)\geq\mu(a)+\Delta$ for every supported invalid $a$, then
\begin{equation}
\Prb\{\plur(A_{1:K})\in\mathcal V\}
\geq1-|\suppset(\mu)\setminus\mathcal V|e^{-K\Delta^2/2}.
\label{eq:supp-frag-success}
\end{equation}
\end{theorem}

\begin{proof}
Let $N_a=\sum_{i=1}^{K}\ind\{A_i=a\}$. Under the first condition, a valid plurality
output implies $N_v\geq N_w$ for at least one $v\in\mathcal V_+$. Fix such a $v$ and
define $X_i=\ind\{A_i=v\}-\ind\{A_i=w\}$. Then $X_i\in[-1,1]$ and
$\E X_i=\mu(v)-\mu(w)\leq-\Delta$. Hence
\begin{align*}
\Prb(N_v\geq N_w)
&=\Prb\left\{\sum_{i=1}^{K}X_i\geq0\right\}\\
&\leq\Prb\left\{\sum_{i=1}^{K}(X_i-\E X_i)\geq K\Delta\right\}\\
&\leq e^{-K\Delta^2/2}
\end{align*}
where the last step is Hoeffding's inequality for variables of range length two
\citep{hoeffding1963probability}. A
union bound over $\mathcal V_+$ proves \eqref{eq:supp-frag-fail}. For the converse, an
invalid plurality output implies $N_a\geq N_{v^\star}$ for at least one supported
invalid $a$. Apply the same argument to
$\ind\{A_i=a\}-\ind\{A_i=v^\star\}$ and union bound over supported invalid answers.
\end{proof}

\begin{corollary}[Sample requirements]
\label{cor:supp-sample}
In the first regime of Theorem~\ref{thm:supp-frag}, plurality validity is at most
$\delta$ whenever
\[
K\geq\frac{2}{\Delta^2}\log\frac{|\mathcal V_+|}{\delta}
\]
In the reverse regime, plurality validity is at least $1-\delta$ whenever
\[
K\geq\frac{2}{\Delta^2}\log\frac{|\suppset(\mu)\setminus\mathcal V|}{\delta}
\]
\end{corollary}

\begin{proof}
Solve the two exponential bounds in Theorem~\ref{thm:supp-frag} for $K$.
\end{proof}

A sharp family makes the mechanism transparent. Let $m$ valid answers each have mass
$\rho/m$ and let one invalid answer have mass $1-\rho$. If
$m>\rho/(1-\rho)$, the invalid answer is the unique mode and plurality converges to it
even when $\rho>1/2$. A score-separating verifier succeeds whenever any valid answer
appears, with probability $1-(1-\rho)^K$. Thus voting estimates the largest surface
atom. Semantic verification estimates membership in the validity class.

\subsection{Exact finite-$K$ law for the deployed selector}
Let $C_i\in\{0,1\}$ denote correctness of candidate $i$ and
$S_i\in\{0,\ldots,6\}$ its score. The deployed index is
\[
I_K=\min\arg\max_{i\leq K}S_i
\]
For one draw, define $Q_s=\Prb(S\leq s)$, $Q_{-1}=0$, and
$w_s=\Prb(C=1\mid S=s)$ on occupied scores.

\begin{theorem}[Exact first-index selection law]
\label{thm:supp-exact-law}
For every $K\geq1$,
\begin{equation}
\begin{aligned}
\Prb(C_{I_K}=1)
&=\sum_s w_s\left(Q_s^K-Q_{s-1}^K\right)\\
&=\E[w_{M_K}]
\end{aligned}
\label{eq:supp-exact-law}
\end{equation}
where $M_K=\max_{i\leq K}S_i$. If the occupied scores are
$s_1<\cdots<s_m$ and $q_j=\Prb(S\leq s_j)$, then
\begin{equation}
\begin{aligned}
\Prb(C_{I_K}=1)=w_{s_m}
+\sum_{j=1}^{m-1}&(w_{s_j}-w_{s_{j+1}})\\
&\times q_j^K
\end{aligned}
\label{eq:supp-telescope}
\end{equation}
and
\begin{equation}
\begin{aligned}
&\Prb(C_{I_{K+1}}=1)-\Prb(C_{I_K}=1)\\
&=\sum_{j=1}^{m-1}(w_{s_{j+1}}-w_{s_j})(1-q_j)q_j^K
\end{aligned}
\label{eq:supp-increment}
\end{equation}
Consequently, score precision nondecreasing in $s$ is sufficient for monotone
improvement in $K$, and the large-$K$ limit is the precision of the highest occupied
score.
\end{theorem}

\begin{proof}
Fix score $s$ and index $i$. The event that $i$ is the first maximizer at score $s$ and
is correct requires: candidate $i$ is correct with score $s$; every earlier candidate
has score strictly below $s$; and every later candidate has score at most $s$. By
independence,
\begin{align*}
&\Prb(I_K=i,C_i=1,S_i=s)\\
&\quad=\Prb(C=1,S=s)Q_{s-1}^{i-1}Q_s^{K-i}
\end{align*}
Summing over $i$ yields
\begin{align*}
&\Prb(C=1,S=s)\sum_{i=1}^{K}Q_{s-1}^{i-1}Q_s^{K-i}\\
&\quad=\Prb(C=1,S=s)\frac{Q_s^K-Q_{s-1}^K}{Q_s-Q_{s-1}}\\
&\quad=w_s(Q_s^K-Q_{s-1}^K)
\end{align*}
where $Q_s-Q_{s-1}=\Prb(S=s)>0$. Summing over $s$ proves the first equality. The
second follows because $\Prb(M_K=s)=Q_s^K-Q_{s-1}^K$. Writing the finite sum over
ordered scores and collecting coefficients of each $q_j^K$ gives
\eqref{eq:supp-telescope}; subtracting its values at $K+1$ and $K$ gives
\eqref{eq:supp-increment}. Every factor $(1-q_j)q_j^K$ is nonnegative, so ordered
precision implies monotonicity. Finally $q_j^K\to0$ for $j<m$, yielding the top-score
precision.
\end{proof}

Equation~\eqref{eq:supp-increment} explains why AUROC alone is insufficient for
best-of-$K$ scaling. A local inversion $w_{s+1}<w_s$ contributes a negative term to the
increment even when the global ranking statistic remains strong. The empirical $K$ curves
therefore report the realized selector directly.

\subsection{Query-local graph transfer}
For a DAG $G$, define
\begin{equation}
\begin{aligned}
H_G(X,Y,Z)=\bigl[\operatorname{moral}\bigl(&
(G_{\underline X})_{\An_{G_{\underline X}}(\{X,Y\}\cup Z)}
\bigr)\bigr]\setminus Z
\end{aligned}
\end{equation}
and the query-local signature
\begin{equation}
\begin{aligned}
\sigma_G(X,Y,Z)=\bigl(&Z\cap(\{X,Y\}\cup\De_G(X)),\\
&\ind\{X\leftrightsquigarrow Y\text{ in }H_G(X,Y,Z)\}\bigr)
\end{aligned}
\label{eq:supp-signature}
\end{equation}

\begin{theorem}[Exact query-local invariance]
\label{thm:supp-local}
For any DAG $G$,
\[
\mathsf{BD}_{G}(X,Y,Z)=1
\quad\Longleftrightarrow\quad
\sigma_G(X,Y,Z)=(\varnothing,0)
\]
Consequently, if $G$ and $\widehat G$ have equal signatures for $(X,Y,Z)$, they give
the same backdoor-validity decision for that candidate.
\end{theorem}

\begin{proof}
The first signature component is empty exactly when the disjointness and no-descendant
clause of \eqref{eq:supp-bd} holds. By the moralized ancestral characterization of
$d$-separation, $X\perp_dY\mid Z$ in $G_{\underline X}$ exactly when $X$ and $Y$ are
disconnected after ancestral restriction, moralization, and deletion of $Z$. This is
the event that the second signature component is zero. Both clauses of the strict
predicate therefore hold exactly for signature $(\varnothing,0)$. Equal signatures
imply equal predicate values.
\end{proof}

\begin{corollary}[Constructed-graph transfer]
\label{cor:supp-constructed}
If a trace is accepted under its graph $\widehat G$ and
$\sigma_{\widehat G}(X,Y,Z)=\sigma_{G^\star}(X,Y,Z)$, then $Z$ is also structurally
valid in the source graph $G^\star$.
\end{corollary}

\begin{proof}
Apply Theorem~\ref{thm:supp-local} to the equal signatures.
\end{proof}

\begin{proposition}[Global edge-F1 is neither necessary nor sufficient]
\label{prop:supp-f1}
Global edge-F1 can approach zero while a query's strict adjustment predicate is
unchanged, and can approach one while that predicate flips.
\end{proposition}

\begin{proof}
For non-necessity, let both graphs contain the query component $X\to Y$. Add $n$
disconnected pairs $(A_i,B_i)$. In $G_n$ use $A_i\to B_i$; in $\widehat G_n$ use
$B_i\to A_i$. The query-local signature for $(X,Y,\varnothing)$ is unchanged, while
the graphs share only one of $n+1$ edges and edge-F1 tends to zero.

For non-sufficiency, let the source graph contain $W\to X$, $W\to Y$, and $X\to Y$,
plus $n$ irrelevant edges shared by both graphs. Let $\widehat G_n$ delete only
$W\to Y$. The empty adjustment set is invalid in the source graph because
$X\leftarrow W\to Y$ remains open, and valid in $\widehat G_n$. The edge-F1 is
$2(n+2)/(2n+5)\to1$.
\end{proof}

\begin{lemma}[Score stability under graph-induced bit errors]
\label{lem:supp-stability}
For a fixed candidate pool, let $S_i$ be the score under the source graph and
$\widehat S_i$ the score under a constructed graph. Suppose the source-score maximizer
$i^\star$ is unique with margin
$\Delta_S=S_{i^\star}-\max_{i\neq i^\star}S_i$. If every trace has at most $e$ bit
disagreements and $\Delta_S>2e$, then $i^\star$ remains the constructed-score maximizer.
\end{lemma}

\begin{proof}
For every $i$, $|\widehat S_i-S_i|\leq e$. Hence, for $i\neq i^\star$,
\[
\widehat S_{i^\star}-\widehat S_i
\geq(S_{i^\star}-e)-(S_i+e)\geq\Delta_S-2e>0
\]
\end{proof}

The score is discrete, so ties are common, and Lemma~\ref{lem:supp-stability} is a
sufficient condition and not a typical-case assertion. This motivates the direct
tie-rule and graph-corruption experiments below.

\section{Experimental Setup}
\label{sec:setup-supp}

\subsection{Policies and sampling}
The principal generators are Qwen2.5-Instruct at 7B, 14B, and 32B parameters
\citep{qwen2024qwen25}, together with Mistral NeMo-12B \citep{mistral2024nemo}. The 7B
family includes the unadapted policy, a six-slot supervised adapter, a verifier-reward
GRPO adapter, and an NF4 adapter. LoRA uses rank 64 and $\alpha=128$ on the attention
projections \citep{hu2022lora}; the quantized adapter follows the NF4/QLoRA setup
\citep{dettmers2023qlora}; and GRPO follows the group-relative objective introduced by
\citet{shao2024deepseekmath}. The clean-core comparison uses $K=8$ and temperature 0.8.
The scaling study reuses prefixes of frozen $K=32$ pools. External benchmarks are
zero-shot with respect to the synthetic training generator.

\subsection{Selectors}
We evaluate the following target-label-free selectors on identical candidates.
\begin{itemize}
\item \textbf{First:} candidate index 1.
\item \textbf{Exact plurality:} plurality over normalized final-answer strings.
\item \textbf{Set medoid:} candidate set with maximum average Jaccard similarity to the
sets in the pool, with ties going to the first index.
\item \textbf{Model confidence:} a binary self-evaluation score produced by the policy.
\item \textbf{Reward model:} Skywork-Reward-V2-8B \citep{liu2026skywork}. The headline row
scores the candidate trace alone, and a separate control supplies the full problem
statement.
\item \textbf{Language-model judge:} a reference-free language-model judge supplied with the
same graph, query, and trace. The judge-scaling study below names and compares two judge
checkpoints from different families on a separate frozen pool.
\item \textbf{CALVER:} the six-component symbolic score and first-index maximum.
\end{itemize}
The same-pool comparison uses only records for which every scorer is present. The pool
contains 1,111 problem--seed records grouped into 126 underlying problems.

\subsection{Benchmarks and correctness}
The primary benchmark is the typed clean core of CLEAR \citep{chen2024clear}: the 126 of
its 480 find-one-valid items whose published task admits multiple valid objects and for
which the implementation contains a graph-class-compatible predicate. This inclusion rule
is fixed from task and graph semantics before candidate generation. It uses no model
output, no correctness
outcome, and no selector score. The core includes DAG, ADMG, and directed queries.
Correctness is recomputed from the supplied graph, not by equality with one listed
example.

A trace is scored once its typed slots parse, and every selector then works from the same
set. The first sample, plurality, the medoid, the reward model, the judge, model confidence,
the structure-only control and CALVER all rank the same traces, and candidate coverage is
computed over the same set. The rule turns on the trace and not on any selector, so it
applies equally to all of them, and the next section reports the alternative handling.

External evaluations use ten standard Bayesian networks from \texttt{bnlearn}
\citep{scutari2010bnlearn,scutari2022bnrepository} and an independent
adjustment-set benchmark derived from \citet{wang2024generalization}. We also evaluate
CausalGraph2LLM mediator and intervention tasks across five encodings
\citep{sheth2025cg2llm}, a graph-from-language suite spanning eight causal motifs and
three presentation levels, the published mem-kk knights-and-knaves benchmark
\citep{xie2025memorization}, and a causal-identification set scored by an independently
implemented DoVerifier prover \citep{he2026doverifier}.

\subsection{Statistics}
Accuracy is the fraction of selected candidates satisfying the benchmark's declared
criterion. Reported intervals are paired bootstrap intervals. When several seeds share a
problem, the bootstrap samples problem identities and retains all observations for the
sampled problem. Headline paired intervals use 10,000 bootstrap draws and exploratory
decompositions use 2,000. Point estimates do not depend on the number of draws, and no
result is selected for presentation based on the interval endpoint. Each selector run
records the cluster key it used together with the number of draws, so an interval can be
traced to the resampling that produced it.

Two conventions govern how the tables should be read. Gains are computed from unrounded
values and can differ by 0.1 from the difference of two rounded entries in the same row.
Decompositions by task, graph class, encoding, motif, and decision margin are
complete partitions of the pooled data, reported in full so that no subset is chosen for
presentation. The confirmatory quantities are the pooled estimates and their paired
intervals.

\subsection{What the problems and the traces look like}
\label{subsec:supp-examples}
The tables below turn on phrases such as ``many-satisfier query'' and ``naturalistic
prose,'' and those are easier to judge from the material itself than from a description of
it. Figure~\ref{fig:supp-clearitem} shows one CLEAR clean-core problem exactly as the
policy receives it, together with one of the eight traces sampled for it. The prompt fixes
the six slots and nothing else. The graph is given, the question names the treatment and
the outcome, and no reference answer appears anywhere in the input. The checker reads the
slots of the returned trace and re-derives the adjustment decision on the supplied graph.

\begin{figure*}[t]
\centering
\begin{footnotesize}
\begin{verbatim}
You are given a directed acyclic graph (DAG) and a question about its causal structure.
Reason step by step inside <think>...</think> using EXACTLY this six-step schema, one STEP
per line, each with a JSON object:
STEP 1 [graph_extract]: {"nodes":[...],"edges":[["A","B"],...]}
STEP 2 [query_id]: {"task":"<task name>","targets":[...]}
STEP 3 [strategy]: {"method":"<graph rule used>"}
STEP 4 [identification_proof]: [{"rule":"...","to":"..."}]
STEP 5 [compute]: {"result": <the graph-derived answer>}
STEP 6 [answer]: {"answer": <final answer in the question's format>}
After </think>, output the final answer on its own line as 'ANSWER: ...'.

Graph nodes: J, K, C, Y, B, R, L, W
Directed edges: K->C, R->L, B->L, B->W, J->K, B->R, C->Y, K->B
Question: Find one valid nodeset that can d-separate node Y and node B.
--------------------------  one of the eight sampled traces  --------------------------
STEP 1 [graph_extract]: {"nodes":["J","K","C","Y","B","R","L","W"],
    "edges":[["K","C"],["R","L"],["B","L"],["B","W"],
             ["J","K"],["B","R"],["C","Y"],["K","B"]]}
STEP 2 [query_id]: {"task":"d_separation_nodeset","targets":["Y","B"]}
STEP 3 [strategy]: {"method":"node_insertion"}
STEP 4 [identification_proof]: [{"rule":"node_insertion","to":"C"}]
STEP 5 [compute]: {"result": ["C"]}
STEP 6 [answer]: {"answer": ["C"]}
ANSWER: C
\end{verbatim}
\end{footnotesize}
\caption{A CLEAR clean-core problem exactly as the policy receives it, above the rule, and
one of the eight traces it returned, below. Both are quoted from the evaluated pool, with one
long JSON line wrapped to fit. The STEP lines in the prompt are the schema the model is
asked to fill, and the lines below the rule are what it produced. This trace attains the highest
score in its pool. Three distinct node sets in that pool satisfy the predicate, so more than
one answer here is accepted.}
\label{fig:supp-clearitem}
\end{figure*}
\begin{figure}[t]
\centering
\small
\setlength{\tabcolsep}{4.5pt}
{\footnotesize\raggedright
\emph{Edges:} $U\!\to\!M$, $O\!\to\!G$, $Q\!\to\!G$, $Q\!\to\!U$, $U\!\to\!G$,
$Q\!\to\!W$, $U\!\to\!W$, $W\!\to\!V$, $M\!\to\!D$\par
\emph{Question:} find one set that $d$-separates $Q$ and $O$.\par}
\vspace{4pt}
\begin{tabular}{@{}clcc@{}}
\toprule
$i$ & Answer & Score & Valid \\
\midrule
1 & $\{U,W\}$ & 1 & yes \\
2 & $\{V,D\}$ & 5 & yes \\
3 & $\{G\}$ & 4 & no \\
4 & $\{M\}$ & 5 & yes \\
5 & \texttt{\{nodeset": ["G"]\}} & 0 & no \\
6 & $\{G\}$ & 0 & no \\
\bottomrule
\end{tabular}
\caption{One CLEAR pool, showing the traces the checker scored. Three candidates give three
different separating sets and one invalid answer is returned twice, so exact plurality
returns $\{G\}$ and is wrong. $G$ is a collider on $Q\!\to\!G\!\leftarrow\!O$, so conditioning
on it opens the path it was meant to block. The checker returns candidate 2, the earliest at
the highest score. Candidate 5 is reproduced exactly as the model emitted it.}
\label{fig:supp-pool}
\end{figure}

Figure~\ref{fig:supp-pool} shows what the mechanism looks like in one pool. Because $G$ is
a collider on the only path between $Q$ and $O$, that path is already blocked, and any set
avoiding $G$ and its descendants separates the two nodes. Three of the six candidates shown
find such a set, and each finds a different one. Conditioning on the collider
instead opens the path, and that single invalid answer is returned twice, which is more
often than any one of the three correct answers. Exact plurality therefore returns it.
Nothing about the checker's decision depends on which of the three valid sets a trace
happens to name.

Figure~\ref{fig:supp-ladder} does the same for the three prose levels, on one
\texttt{sachs} problem that appears at all three. At L1 the text names the edges and
extraction is close to transcription. At L2 the same structure is described as a mechanism
and no edge list appears. At L3 it is an ordinary narrative. A trace has to recover the
same confounding of PKA and Raf by PKC from whichever telling it is handed, which is what
the drop in exact graph recovery from .826 to .336 across the ladder measures.

\begin{figure}[t]
\centering
\small
\setlength{\tabcolsep}{3pt}
{\footnotesize\raggedright
\emph{Variables:} PKA, PKC, Raf.\ \ \emph{Question:} identify a set of variables sufficient
to adjust for confounding when estimating the total effect of PKA on Raf.\par}
\vspace{4pt}
\begin{tabular}{@{}p{.05\columnwidth}p{.86\columnwidth}@{}}
\toprule
L1 & PKA has a direct effect on Raf. PKC has a direct effect on PKA and Raf. \\
\addlinespace
L2 & In cellular signaling pathways, PKC activation can drive Raf activity directly and
indirectly through its effect on PKA. Specifically, PKC influences Raf partly by increasing
PKA levels, which in turn shapes Raf activity. This cascade highlights how PKC plays a
central role in modulating Raf through multiple mechanisms. \\
\addlinespace
L3 & In cellular signaling pathways, PKC activity drives both PKA and Raf activation. PKA
then shapes Raf's activity level through its downstream effects, indicating a cascade where
PKC influences Raf both directly and indirectly through PKA. \\
\bottomrule
\end{tabular}
\caption{One \texttt{sachs} problem at the three levels of indirection, quoted from the
evaluation corpora. The variables, the question and the source subgraph
$\mathrm{PKC}\!\to\!\mathrm{PKA}\!\to\!\mathrm{Raf}$ with
$\mathrm{PKC}\!\to\!\mathrm{Raf}$ are the same at every level, so only the telling changes.
The valid adjustment set is $\{\mathrm{PKC}\}$ throughout.}
\label{fig:supp-ladder}
\end{figure}

\section{Results on the CLEAR Clean Core}
\label{sec:clear-supp}

\subsection{Every policy, against a set-aware baseline}
Table~\ref{tab:supp-policy} gives the complete policy comparison on the same
126 problems. The exact-plurality and Jaccard-medoid columns distinguish exact-string
fragmentation from set-aware agreement. The medoid improves over exact voting for six of
eight policies, and CALVER has the highest point estimate in every row. The unadapted
7B row is the smallest effect and its interval includes zero. Every interval for the seven
adapted or larger policies excludes zero.

The NF4 rows isolate quantization at 7B and 32B. Quantization reduces the supply of usable
7B candidates, whereas the 32B NF4 result remains close to its full-precision counterpart.
In both cases, the advantage over set-aware aggregation remains positive.

A surface-form normalization audit restores a common 126-problem denominator across all
policies. Cycle witnesses written as arrow chains or ordered lists are canonicalized to the
same walk before the unchanged cycle predicate is applied, so the set of problems entering
a comparison stays fixed whatever answer style a policy prefers. The canonicalization is
additive, so rows for policies
that already emitted the arrow form are unchanged, and every row of
Table~\ref{tab:supp-policy} rests on the same 126 problems.

\begin{table*}[t]
\centering
\small
\setlength{\tabcolsep}{5.2pt}
\begin{tabular}{lrrrrrr}
\toprule
Policy & Units & First & Exact plurality & Set medoid & CALVER & $\Delta$ vs. medoid (95\% CI) \\
\midrule
Qwen 7B base & 360 & 23.6 & 31.4 & 32.2 & 36.7 & $+4.4$ [$-0.8$, 9.7] \\
Qwen 7B SFT & 377 & 23.6 & 28.9 & 31.6 & 43.8 & $+12.2$ [6.9, 17.8] \\
Qwen 7B verifier-GRPO & 374 & 27.5 & 33.2 & 35.3 & 46.0 & $+10.7$ [5.6, 15.8] \\
Qwen 7B NF4 & 372 & 14.0 & 17.7 & 20.4 & 38.4 & $+18.0$ [13.3, 23.0] \\
Mistral NeMo 12B & 378 & 21.7 & 27.5 & 28.3 & 47.4 & $+19.0$ [14.6, 24.1] \\
Qwen 14B & 378 & 41.8 & 45.8 & 44.4 & 68.5 & $+24.1$ [18.8, 29.6] \\
Qwen 32B & 378 & 47.1 & 48.7 & 47.4 & 62.4 & $+15.1$ [10.1, 20.6] \\
Qwen 32B NF4 & 378 & 44.4 & 51.6 & 52.1 & 61.9 & $+9.8$ [5.6, 14.6] \\
\bottomrule
\end{tabular}
\caption{CLEAR clean-core selection accuracy (\%) at $K=8$. Every policy covers the same
126 source problems over three seeds. Units counts the problem--seed observations retained
under the parse-admissibility rule, which is applied identically to every selector.
Intervals are clustered by problem.}
\label{tab:supp-policy}
\end{table*}

\subsection{Every selector on one frozen pool}
The common-row comparison in Table~\ref{tab:supp-comparator} removes denominator differences
between scorer logs. The learned reward model is the strongest generic comparator, at
30.5\%. CALVER reaches 42.1\%. Every interval is paired and clustered by problem.

\begin{table*}[t]
\centering
\small
\setlength{\tabcolsep}{3.4pt}
\begin{tabular}{lrr}
\toprule
Selector & Accuracy & Gain of CALVER (95\% CI) \\
\midrule
Structure-only control & 23.5 & $+18.6$ [15.6, 21.7] \\
First & 24.8 & $+17.4$ [13.9, 20.9] \\
Language-model judge & 27.4 & $+14.8$ [11.4, 18.2] \\
Model confidence & 30.1 & $+12.1$ [8.6, 15.6] \\
Skywork Reward V2 8B & 30.5 & $+11.6$ [8.2, 15.0] \\
Exact plurality & 30.9 & $+11.3$ [7.8, 14.9] \\
CALVER & 42.1 & -- \\
\midrule
Candidate coverage & 51.8 & $-9.7$ [$-12.2$, $-7.4$] \\
\bottomrule
\end{tabular}
\caption{Same-pool selector comparison on 1,111 problem--seed records from 126 problem
clusters, at $K=8$. The structure-only control keeps the graph, query, derivation, and
format components and removes the validity check. Candidate coverage is the fraction of
pools containing at least one correct candidate. It is computed only after generation, and
its final-column entry is the remaining gap to that ceiling, not a comparison with a
competing method.}
\label{tab:supp-comparator}
\end{table*}

Every method ranks the same ordered candidate texts. The judge receives the graph, query,
and trace. The reward model and the confidence score rate candidates on their own. No
selector receives a target answer.

Each selector returns exactly one index from the stripped frozen pool, and that index is
what gets evaluated. First is candidate index one, the scalar scorers return the earliest
candidate attaining their maximum, and the answer-level methods use their predeclared tie
rules over canonicalized answer strings. No index is revised after a selector has returned
it, and evaluation labels are attached only once every index is fixed. The comparison keeps
the records for which every candidate-level scorer is available, which is what removes the
denominator differences between scorer logs. Each candidate's comparator scores are stored
inside the frozen pool, so the whole table regenerates on CPU without a reward-model or
judge forward pass.

The structure-only control keeps parsing, binding, derivation, and answer-format checks and
drops only semantic validity. It lands below both the first sample and exact plurality,
which is the clearest single indication that the gain comes from the causal predicate. The
distance from the checker up to candidate coverage is the headroom the frozen pool still
leaves. Coverage is computed after selection and is a ceiling, not a deployable method.

\paragraph{Traces that do not parse.}
A trace whose typed slots do not parse can either be left out of the comparison or scored
zero and left in, where it carries its raw answer into the vote. Table~\ref{tab:supp-admission}
reports both. Every selector loses ground under the second handling, since each then has to
rank traces the checker cannot score. The margin over plurality moves the other way and
widens from 11.1 to 13.0 points. The reported row is the one with the higher absolute
accuracies and the smaller margin, so the comparison does not turn on the choice.

\begin{table}[t]
\centering
\small
\setlength{\tabcolsep}{4pt}
\begin{tabular}{@{}lrrrr@{}}
\toprule
Unparsable trace & First & Plurality & CALVER & $\Delta$ \\
\midrule
Left out (reported) & 24.8 & 31.1 & 42.2 & $+11.1$ \\
Scored zero, left in & 20.3 & 25.8 & 38.8 & $+13.0$ \\
\bottomrule
\end{tabular}
\caption{Handling of traces whose typed slots do not parse, on the 1,111 CLEAR records.
Candidate coverage is 51.9 under both, since the choice does not change which traces are
valid. Both rows are counted by the sensitivity scorer, whose answer normalization differs
slightly from the one used in Table~\ref{tab:supp-comparator}, so its plurality sits 0.2
above that table and its CALVER and coverage figures sit 0.1 above.}
\label{tab:supp-admission}
\end{table}

\subsection{Prompt-conditioning the reward model}
\label{subsec:supp-rmprompt}
The main reward-model row scores each candidate trace individually. A natural objection is
that the model would do better with the problem in front of it. We therefore scored every
candidate a second time with the full problem statement supplied as context, and reran the
selection on the identical frozen pool. Problem conditioning is worth 5.5 points to the
reward model. Even so, a fully informed 8B preference model only matches exact plurality on
these candidates, 25.7 against 25.8, and the symbolic score reaches 38.7. The three figures
come from one end-to-end re-grading of the pool and are read against each other, not
against Table~\ref{tab:supp-comparator}. What separates the reward model from the checker
is the ability to check the graph predicate, and problem context alone closes little of it.

\subsection{Distilled process-reward control}
\label{subsec:supp-prm}
A second learned baseline predicts the symbolic score directly, in the spirit of training an
aggregator over sampled solutions \citep{zhao2025aggregation}. The model is a regression
head on frozen Qwen2.5-0.5B-Instruct with a LoRA adapter, trained with mean-squared error
on the verifier's $0$--$6$ output. It is evaluated as a best-of-$K$ selector on the same
nine frozen ATE pools: three policies, three seeds, 500 problems, and $K=8$. This is a
distillation control with verifier-generated labels, not human-preference or answer labels.

\begin{table}[t]
\centering
\small
\setlength{\tabcolsep}{4pt}
\begin{tabular}{lrr}
\toprule
Selector ($n=4{,}500$, $K=8$) & Accuracy & vs.\ vote (95\% CI) \\
\midrule
First sample & 76.6 & $-0.4$ \\
Exact plurality & 77.0 & -- \\
Learned PRM (distilled) & 77.1 & $+0.1$ [$-1.0$, $+1.3$] \\
CALVER (symbolic) & 79.9 & $+2.9$ [$+2.0$, $+3.7$] \\
\midrule
Candidate coverage & 84.3 & $+7.3$ \\
\bottomrule
\end{tabular}
\caption{A process reward model distilled from the symbolic score does not inherit its
selection benefit. Intervals are paired bootstraps clustered by problem. The PRM interval
includes zero, and CALVER exceeds the PRM by $2.8$ points [$+1.6$, $+3.9$].}
\label{tab:supp-prm}
\end{table}

At this budget the distilled PRM is statistically indistinguishable from exact plurality,
and the symbolic score it was trained to imitate stands 2.8 points higher [$+1.6$, $+3.9$] on
identical candidates. Its best validation RMSE is 0.83 on the $0$--$6$ scale, small in
absolute score units and still enough to reverse the ordering between adjacent top scores,
which is the only quantity best-of-$K$ selection consumes. That property also settles how the
two selectors behave as the budget grows. Equation~\eqref{eq:supp-increment} makes
improvement with $K$ depend on precision being nondecreasing in the score, so an imitator
that inverts adjacent top scores gains nothing from a larger pool, while the score it copies
keeps climbing toward the coverage ceiling.

\subsection{Judge scaling}
\label{subsec:supp-judgescale}
The direct objection to a fixed symbolic checker is that a large enough language model,
given the same graph, should reach the same decisions. We test that at two scales and two
families on one frozen Qwen-7B candidate pool of 498 CLEAR items. The judges are
Mistral NeMo-Instruct-12B and Qwen2.5-72B-Instruct
\citep{mistral2024nemo,qwen2024qwen25}. Each receives exactly the information the checker
receives, and selection uses the judge's own validity score. This pool is broader and
harder than the 126-problem clean core, with a candidate-coverage ceiling of .231 rather
than .518, so the quantity to read across tables is the distance from voting.

\begin{table*}[t]
\centering
\small
\setlength{\tabcolsep}{3.6pt}
\begin{tabular}{lrrr}
\toprule
Selector & Acc. & $\Delta$ vs.\ vote & 95\% CI \\
\midrule
First sample & .072 & $-1.0$ & --- \\
Nemo-12B judge (different family) & .060 & $-2.2$ & [$-4.6$, $+0.2$] \\
Majority & .082 & --- & --- \\
Qwen2.5-72B judge (same family) & .086 & $+0.4$ & [$-2.0$, $+3.0$] \\
\midrule
Candidate coverage & .231 & --- & --- \\
\bottomrule
\end{tabular}
\caption{Language-model judges as selectors on one frozen 498-item CLEAR pool. Both
intervals include zero, so neither judge is distinguishable from answer voting. Measured
from the first sample, the 72B judge closes 8.9\% of the distance to candidate coverage and
the 12B judge closes none of it.}
\label{tab:supp-judgescale}
\end{table*}

The two judges fail in different ways, and that difference is the informative part. The
12B judge assigns its maximum validity score to 78\% of candidates, so it
barely discriminates among them. The 72B judge produces a far broader score distribution
and, on inspection, writes substantive structural critiques that name a missed backdoor
path or confirm that none remains open. Its selected-answer accuracy is nevertheless
indistinguishable from exact plurality. A six-fold increase in judge size moves the
estimate by 2.6 points and costs one neural forward pass per candidate, against a
deterministic check measured in milliseconds on CPU.

\subsection{Scaling with the number of samples}
Table~\ref{tab:supp-kcurve} reports every frozen-prefix point for the base, SFT, and
verifier-GRPO policies. The two selectors coincide at $K=1$ by construction and separate
from $K=2$ onward. On the SFT and GRPO policies voting flattens after $K=16$ while
verification keeps climbing, which is where the widening gap comes from. The base policy
starts lower and its gap at $K=32$ is smaller, at 10.3 points, and it stays open throughout.
Every point is a prefix of one frozen $K=32$ pool, generated separately from the $K=8$ pools
of Table~\ref{tab:supp-policy}. The two tables therefore describe different samples of the
same 126 problems. Taking prefixes of a single pool is how the budget is varied in
deployment, so the curve measures the setting the method is actually used in.

\begin{table*}[t]
\centering
\small
\setlength{\tabcolsep}{4.5pt}
\begin{tabular}{c|rr|rr|rr}
\toprule
& \multicolumn{2}{c|}{Qwen 7B base} & \multicolumn{2}{c|}{Qwen 7B SFT} & \multicolumn{2}{c}{Qwen 7B verifier-GRPO} \\
$K$ & Vote & CALVER & Vote & CALVER & Vote & CALVER \\
\midrule
1 & 13.2 & 13.2 & 20.6 & 20.6 & 23.0 & 23.0 \\
2 & 15.9 & 21.4 & 22.2 & 29.9 & 24.6 & 33.3 \\
4 & 21.4 & 31.0 & 27.5 & 38.1 & 30.4 & 39.9 \\
8 & 28.3 & 37.8 & 30.4 & 44.7 & 35.4 & 48.1 \\
16 & 34.7 & 42.6 & 32.5 & 51.6 & 36.5 & 55.0 \\
32 & 37.6 & 47.9 & 32.5 & 57.9 & 36.8 & 60.6 \\
\bottomrule
\end{tabular}
\caption{CLEAR clean-core accuracy (\%) as $K$ increases. Every row uses a prefix of the
same frozen $K=32$ pool.}
\label{tab:supp-kcurve}
\end{table*}

\subsection{Graph class, query, and alternative valid answers}
Table~\ref{tab:supp-task} shows that gains are largest on $d$-separation and backdoor
selection, where several valid sets frequently exist. Directed-cycle detection has a
compact witness space and changes little. In an audited subset, 11 of 21 selected objects
that pass the graph-validity check differ from CLEAR's listed example. Manual and
programmatic
checks confirm that the alternatives satisfy the graph predicate.

\begin{table}[t]
\centering
\small
\setlength{\tabcolsep}{3.2pt}
\begin{tabular}{llrrr}
\toprule
Graph & Query & Units & Vote & CALVER \\
\midrule
DAG & $d$-separation & 90 & 44.4 & 75.6 \\
DAG & backdoor set & 36 & 25.0 & 38.9 \\
DAG & blocked path & 53 & 1.9 & 17.0 \\
ADMG & backdoor set & 36 & 8.3 & 36.1 \\
ADMG & blocked path & 54 & 5.6 & 11.1 \\
Directed & cycle witness & 108 & 49.1 & 50.9 \\
\bottomrule
\end{tabular}
\caption{Qwen 7B SFT clean-core accuracy (\%) by graph class and task at $K=8$.}
\label{tab:supp-task}
\end{table}

\subsection{Where the gain comes from, pool by pool}
\label{subsec:supp-frag-empirical}
Theorem~\ref{thm:supp-frag} conditions on an invalid answer carrying strictly more mass
than any valid one. That is checkable per pool, so we can locate the gain instead of only
aggregating it. Of the clean-core pools holding at least one valid candidate, the condition
holds on 39\% of them. A pool with no valid candidate leaves every selector equal, and the
pools that do hold one make up exactly the 51.8\% candidate coverage reported above.

\begin{table*}[t]
\centering
\small
\setlength{\tabcolsep}{4pt}
\begin{tabular}{lrrrr}
\toprule
Stratum & $n$ & Vote & CALVER & $\Delta$ \\
\midrule
Invalid answer is the strict plurality & 226 & \phantom{0}0.0 & 68.6 & $+68.6$ \\
Otherwise & 350 & 82.0 & 78.6 & $-3.4$ \\
\midrule
All pools with headroom & 576 & 49.8 & 74.7 & $+24.8$ \\
\bottomrule
\end{tabular}
\caption{Per-pool decomposition on the clean core. CALVER rescues 197 pools and loses
54. An invalid plurality is what defines the first stratum, so its vote entry is $0.0$ by
construction. The measured quantities there are the stratum's prevalence and CALVER's
accuracy within it.}
\label{tab:supp-frag-empirical}
\end{table*}

The decomposition is computed end to end by the study's clean-core scorer, which
recomputes graph validity for every sampled trace and ranks candidates by the six components
it recovers. Voting and the validity-ranked selector are computed on that single
basis, so the strata, the counts, and the differences within the table are exact.

The aggregate 24.8-point gain is the net of two cases. There are 197 repairs where the plurality is
invalid, against 54 changed selections where plurality already concentrates on a valid
answer, a ratio of 3.6 to 1. This locates the fragmentation mechanism instead of only
aggregating over it.

\subsection{Tie robustness}
Maximum scores are frequently tied because the score has seven values, so the tie rule is
worth auditing directly. Table~\ref{tab:supp-ties} recomputes accuracy for all eight
policies under three deterministic tie rules. On seven of the eight, resolving every tie to
the worst available candidate costs at most 0.6 points against the deployed first-index
rule, and six of those are within 0.3. The widest band is 4.3 points, and even under
adversarial resolution that row stands far above voting (.341 against .177). The gain is
a property of the score, not of the order in which candidates happen to arrive.

\begin{table*}[t]
\centering
\small
\setlength{\tabcolsep}{5pt}
\begin{tabular}{lrrrrr}
\toprule
Policy & Tie rate & Worst tied & First index & Best tied & Vote \\
\midrule
Qwen 14B & .910 & .685 & .685 & .696 & .458 \\
Qwen 32B & .675 & .622 & .624 & .624 & .487 \\
Qwen 32B NF4 & .638 & .616 & .619 & .622 & .516 \\
Qwen 7B SFT & .767 & .438 & .438 & .443 & .289 \\
Qwen 7B verifier-GRPO & .727 & .457 & .460 & .463 & .332 \\
Mistral NeMo 12B & .796 & .471 & .474 & .481 & .275 \\
Qwen 7B NF4 & .753 & .341 & .384 & .401 & .177 \\
Qwen 7B base & .506 & .361 & .367 & .381 & .314 \\
\bottomrule
\end{tabular}
\caption{Tie-rule ablation on the clean core, for the same eight policies and the same
problem--seed units as Table~\ref{tab:supp-policy}. ``Worst tied'' resolves every top-score
tie to an incorrect candidate whenever one exists.}
\label{tab:supp-ties}
\end{table*}

\section{External Benchmarks and Transfer}
\label{sec:external-supp}

\subsection{Published Bayesian networks with supplied graphs}
We form nontrivial treatment--outcome pairs from ten standard bnlearn networks. Each prompt
contains the query-relevant ancestral subgraph. The answer is any set satisfying the strict
backdoor criterion, and no canonical set is supplied to the selector. Table~\ref{tab:supp-bnlearn}
shows every network.

\begin{table*}[t]
\centering
\small
\setlength{\tabcolsep}{4.7pt}
\begin{tabular}{lrrrrr}
\toprule
Network & Problems & Vote & CALVER & Gain & 95\% CI \\
\midrule
win95pts & 27 & 25.9 & 55.6 & $+29.6$ & [14.8, 48.2] \\
mildew & 41 & 31.7 & 61.0 & $+29.3$ & [14.6, 43.9] \\
alarm & 17 & 64.7 & 88.2 & $+23.5$ & [5.9, 47.1] \\
barley & 43 & 37.2 & 60.5 & $+23.3$ & [11.6, 37.2] \\
hepar2 & 98 & 30.6 & 48.0 & $+17.3$ & [10.2, 25.5] \\
insurance & 115 & 52.2 & 66.1 & $+13.9$ & [7.8, 20.9] \\
water & 31 & 32.3 & 41.9 & $+9.7$ & [0.0, 22.6] \\
child & 20 & 60.0 & 60.0 & $0.0$ & -- \\
hailfinder & 5 & 60.0 & 60.0 & $0.0$ & -- \\
sachs & 13 & 7.7 & 7.7 & $0.0$ & -- \\
\midrule
Pooled & 410 & 39.8 & 56.8 & $+17.1$ & [13.4, 20.7] \\
\bottomrule
\end{tabular}
\caption{Find-one-valid adjustment-set selection on independently sourced Bayesian
networks.}
\label{tab:supp-bnlearn}
\end{table*}

The three zero-gain networks contain no correctness diversity among the sampled
top-scoring candidates, so the selector is inert. The mechanism predicts exactly that,
since reranking can improve accuracy only when the frozen pool holds candidates that differ
in validity.

\subsection{The same networks rendered as prose at three levels}
\label{subsec:supp-bnprose}
Each trace reconstructs a graph from text and proposes an adjustment set. The checker sees
only that reconstructed graph. Selected-answer correctness is measured on the retained
source graph. The same 132 treatment--outcome problems are verbalized three times, so the
levels differ only in how directly the structure is stated:

\begin{itemize}
\item \textbf{L1, edge-stated:} the scenario names each causal edge in words.
\item \textbf{L2, mechanism-explicit:} the scenario describes structural mechanisms with
no edge list.
\item \textbf{L3, naturalistic:} ordinary narrative with distractors and reordered clauses.
\end{itemize}

L2 and L3 prose is written by Qwen2.5-14B-Instruct and admitted only if the
predeclared round-trip extractor reproduces the source subgraph exactly (edge-F1 $=1$).
The filter runs before candidate generation and uses neither evaluated model outputs nor
selector scores, and the problems it retains are the Probs column of
Table~\ref{tab:supp-bnprose}. The principal evaluation policy is unadapted
Qwen2.5-7B-Instruct at $K=8$ over three seeds, with one seed at L1.

\begin{table*}[t]
\centering
\small
\setlength{\tabcolsep}{4.4pt}
\begin{tabular}{llrrrrrrrrrr}
\toprule
Level & Nets & Probs & Units & Parse & Edge-F1 & Exact $G$ & Vote & Struct. & CALVER & Coverage & $\Delta$ vs.\ vote (95\% CI) \\
\midrule
\multicolumn{12}{l}{\emph{Qwen2.5-7B-Instruct, unadapted}} \\
L1 edge-stated & 9 & 132 & 132 & .97 & .950 & .826 & 50.8 & 48.5 & 75.0 & 79.5 & $+24.2$ [17.4, 31.8] \\
L2 mechanism & 8 & \phantom{0}82 & 246 & .95 & .821 & .365 & 31.3 & 29.3 & 45.5 & 52.0 & $+14.2$ [\phantom{0}8.5, 20.7] \\
L3 naturalistic & 8 & \phantom{0}91 & 273 & .94 & .810 & .336 & 33.3 & 30.8 & 50.9 & 54.9 & $+17.6$ [11.4, 24.2] \\
\midrule
\multicolumn{12}{l}{\emph{Mistral NeMo-12B-Instruct, identical prompts and seeds}} \\
L2 mechanism & 8 & \phantom{0}82 & 246 & 1.00 & .948 & .673 & 32.5 & 31.3 & 52.8 & 55.3 & $+20.3$ [13.4, 27.2] \\
L3 naturalistic & 8 & \phantom{0}91 & 273 & .98 & .936 & .642 & 36.3 & 30.8 & 54.2 & 58.2 & $+18.0$ [11.4, 24.9] \\
\bottomrule
\end{tabular}
\caption{Published Bayesian networks rendered as prose at three levels of indirection, for
two policies from different model families. Units are problem--seed observations, and
intervals are paired bootstrap clustered by problem. ``Struct.'' is the structure-only
control, which
keeps every trace-format component and removes only the graph-validity check.}
\label{tab:supp-bnprose}
\end{table*}

Exact graph recovery more than halves between L1 and the two prose levels, and the gain
over voting does not follow it down. That is what query-local transfer predicts, since a
candidate can preserve the relations the requested adjustment decision depends on while
misreporting distant edges. The structure-only control stays below voting at every level,
which points to semantic validity and not schema compliance as the source of the
improvement. The L2 and L3 intervals overlap, so their point-estimate difference is not
interpreted.

\paragraph{A second model family.}
The lower half of the table repeats L2 and L3 with Mistral NeMo-12B-Instruct on identical
prompts and seeds. Nemo reconstructs these graphs roughly twice as often as Qwen 7B and its
gains are comparable, so the advantage survives both a change of model family and a wide
range of reconstruction fidelity.

\paragraph{Round-trip-filter sensitivity.}
We rebuild both prose levels at a relaxed edge-F1 threshold of .8 and rerun the same
evaluation. The relaxed build admits almost every problem and restores all nine networks.
Table~\ref{tab:supp-bnprose-sens} compares the two filters.

\begin{table*}[t]
\centering
\small
\setlength{\tabcolsep}{3.4pt}
\begin{tabular}{llrrrrr}
\toprule
Level & Filter & Probs & Exact $G$ & Vote & CALVER & $\Delta$ (95\% CI) \\
\midrule
L2 & $=1.0$ & \phantom{0}82 & .365 & 31.3 & 45.5 & $+14.2$ [\phantom{0}8.5, 20.7] \\
L2 & $\ge.8$ & 131 & .206 & 26.5 & 39.2 & $+12.7$ [\phantom{0}8.1, 17.6] \\
L3 & $=1.0$ & \phantom{0}91 & .336 & 33.3 & 50.9 & $+17.6$ [11.4, 24.2] \\
L3 & $\ge.8$ & 129 & .215 & 26.1 & 40.8 & $+14.7$ [10.3, 19.4] \\
\bottomrule
\end{tabular}
\caption{Filter-strictness sensitivity. Relaxing the round-trip threshold admits 99\% and
98\% of problems and restores all nine networks. Absolute accuracies fall, because less
faithful prose is harder to read, but the advantage over voting persists with intervals
excluding zero.}
\label{tab:supp-bnprose-sens}
\end{table*}

Less faithful prose is harder to read, so graph recovery and absolute accuracy both fall.
The gain over voting attenuates by under three points at either level and all four
intervals still exclude zero. The structure-only control also stays below voting on the
relaxed L2 corpus, by 4.3 points [$-8.4$, $-0.5$]. Nothing about the selection effect is
created by the strict threshold.

\paragraph{Network attrition under the strict filter.}
The strict build contains nine networks at L1 and eight at L2/L3. Hailfinder has too few
L1 items, and no mildew narrative reaches edge-F1 $=1$ at L2 or L3. Mildew returns under
the relaxed threshold. The per-network retained counts are stored with the evaluation corpus
and are used unchanged by every selector.

\subsection{Independent backdoor-adjustment benchmark}
\label{subsec:supp-wang}
The benchmark of \citet{wang2024generalization} supplies independently generated DAGs and
treatment--outcome pairs. We use the confounded six-node adjustment-set split, normalize
variable names, and grade proposed sets with the strict backdoor predicate. This avoids
depending on a released grader that does not enforce the no-descendants condition. Both
voting and CALVER operate on the same proposed adjustment-set objects.

\begin{table*}[t]
\centering
\small
\setlength{\tabcolsep}{4pt}
\begin{tabular}{lrrrr}
\toprule
Policy & $n$ & Set vote & CALVER & Gain (95\% CI) \\
\midrule
Qwen 7B SFT & 167 & 53.9 & 80.2 & $+26.3$ [19.8, 32.9] \\
Qwen 1.5B SFT & 167 & 1.8 & 32.9 & $+31.1$ [24.0, 38.3] \\
\bottomrule
\end{tabular}
\caption{Backdoor-adjustment selection on the independent benchmark at $K=8$.}
\label{tab:supp-cibench}
\end{table*}

The weaker 1.5B policy produces a highly fragmented pool, so no single set commands a
plurality. A validity test is unaffected by how the correct mass is divided among sets.
The result corroborates the selection mechanism on independently sourced graphs.
A direct graph algorithm can solve the formal task once the DAG is available, so what the
experiment tests is candidate selection and not a need for language modeling.

\subsection{CausalGraph2LLM}
The adapter covers mediator and post-intervention reachability across prose, JSON,
adjacency, multi-node, and Graphviz encodings \citep{sheth2025cg2llm}. The
formal adapter reproduces all 4,800 released labels. Model evaluation uses 1,200 distinct
items per run while preserving both tasks, all five encodings, and all twelve graphs.
Pooling two policies and two seeds yields 4,800 problem--seed units over those 1,200
problems. Table~\ref{tab:supp-cg2llm} gives the $K=16$ decomposition.

\begin{table*}[t]
\centering
\small
\setlength{\tabcolsep}{3.1pt}
\begin{tabular}{lrrrr}
\toprule
Slice & Units & Vote & CALVER & Gain \\
\midrule
Mediator & 2,400 & 60.8 & 65.5 & $+4.8$ \\
Intervention & 2,400 & 50.4 & 51.2 & $+0.8$ \\
\midrule
Adjacency & 960 & 57.2 & 59.9 & $+2.7$ \\
Graphviz & 960 & 54.2 & 54.7 & $+0.5$ \\
JSON & 960 & 55.6 & 59.1 & $+3.4$ \\
Multi-node & 960 & 52.8 & 57.1 & $+4.3$ \\
Prose & 960 & 58.1 & 60.9 & $+2.8$ \\
\midrule
Pooled & 4,800 & 55.6 & 58.3 & $+2.8$ \\
\bottomrule
\end{tabular}
\caption{CausalGraph2LLM accuracy (\%), pooled across two policies and two seeds. The task
rows and the encoding rows are two complete decompositions of the same 4,800 units, and the
pooled row is the value reported in the main paper. Gains are computed before rounding.}
\label{tab:supp-cg2llm}
\end{table*}

\subsection{Graph construction from language}
\label{subsec:supp-construction}
The construction suite contains 80 causal graphs formed from eight motifs across ten
domains. Every graph is confounded, includes a descendant-of-treatment trap, and 30 graphs
admit at least two minimal valid sets. The policy is the unadapted Qwen 7B instruction
model, sampled at $K=8$ over three seeds. Each trace receives a variable legend and a
narrative, constructs a graph, and proposes a set. The checker validates the set on the
trace's graph, and evaluation retains the source graph separately.

L1 explicitly states edges in prose. L2 describes mechanisms without an edge list. L3 is a
naturalistic scenario with distractors and reordered clauses. Table~\ref{tab:supp-levels}
reports all 240 problem--seed units per level.

\begin{table}[t]
\centering
\small
\setlength{\tabcolsep}{3.0pt}
\begin{tabular}{lrrrrr}
\toprule
Level & Edge-F1 & Exact $G$ & Vote & CALVER & Gain \\
\midrule
L1 edge-stated & .959 & .390 & .37 & .87 & $+.50$ \\
L2 mechanism & .954 & .340 & .29 & .81 & $+.52$ \\
L3 naturalistic & .884 & .060 & .30 & .82 & $+.52$ \\
\bottomrule
\end{tabular}
\caption{Graph construction from text. Accuracy and reconstruction metrics are fractions;
$n=240$ per level.}
\label{tab:supp-levels}
\end{table}

The gap remains positive on all eight motifs (Table~\ref{tab:supp-motifs}). The largest
changes occur on motifs with two valid minimal sets or harmless covariates that split
surface answers. The smallest occurs on the two-confounder motif, where one compact set
dominates the pool and voting is already strongest.

\begin{table*}[t]
\centering
\small
\setlength{\tabcolsep}{5pt}
\begin{tabular}{lrrr@{\hspace{16pt}}lrrr}
\toprule
Motif & Vote & CALVER & Gain & Motif & Vote & CALVER & Gain \\
\midrule
X-side block, two sets & .17 & .88 & $+.71$ & Collider trap & .21 & .66 & $+.44$ \\
Confounder + predictor & .14 & .79 & $+.64$ & Single confounder & .39 & .80 & $+.41$ \\
Y-side block, two sets & .28 & .89 & $+.61$ & Combined, two sets & .46 & .86 & $+.40$ \\
Instrument distractor & .31 & .88 & $+.57$ & Two confounders & .60 & .93 & $+.33$ \\
\bottomrule
\end{tabular}
\caption{Per-motif graph-construction accuracy, pooled across L1--L3. Each motif contains
90 problem--seed units.}
\label{tab:supp-motifs}
\end{table*}

A predeclared hand-written stress set of 12 harder L3 narratives is scored the same way.
Mean edge-F1 is .875 and exact
recovery is 10\%. Vote reaches .31 and CALVER .81, a 50-point gain [31, 67]. Splitting
this fixed set by mean construction fidelity gives gains of 33 points on the lower half
(edge-F1 .829) and 67 on the higher half (.921). The within-set trend directly connects
formalization fidelity to selection quality.

\subsection{Extract-then-solve crossover}
A natural comparison constructs one consensus graph and solves the formal query directly.
For a fair executable comparison, we retain the 153 units per level with at least one
parseable candidate graph. Table~\ref{tab:supp-crossover} reports the result, and the
candidate-coverage column is what makes it interpretable. Extraction plus solving derives a
fresh adjustment set from the merged graph instead of choosing among the sampled answers,
so it sits outside the selection class altogether. At L1 and L2 it reaches or exceeds the
coverage ceiling, .928 against .922 and .967 against .824. No method restricted to
choosing one of the sampled candidates can match that, and CALVER sits exactly at the
ceiling at L2. At L3, where exact graph recovery falls to 17.2\%, the consensus
graph is no longer dependable and the two routes are statistically indistinguishable. The
result gives a practical rule. Extract once and solve when transcription is reliable, and
verify every candidate when readings diverge and different traces preserve different
query-local facts.

\begin{table*}[t]
\centering
\small
\setlength{\tabcolsep}{3.1pt}
\begin{tabular}{lrrrrr}
\toprule
Level & Exact $G$ & Solver & CALVER & Coverage & $\Delta$ (95\% CI) \\
\midrule
L1 & .610 & .928 & .837 & .922 & $-.092$ [$-.150,-.039$] \\
L2 & .674 & .967 & .824 & .824 & $-.144$ [$-.216,-.072$] \\
L3 & .172 & .837 & .856 & .882 & $+.020$ [$-.046,.092$] \\
\bottomrule
\end{tabular}
\caption{Consensus extraction plus exact solving versus candidate-wise verification on
parseable graph-construction units. Coverage is the post-generation ceiling set by the
presence of at least one correct proposed answer, and it bounds every selector.}
\label{tab:supp-crossover}
\end{table*}

\begin{figure}[t]
\centering
\includegraphics[width=\columnwidth]{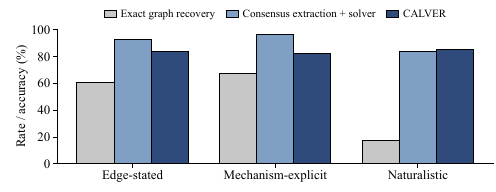}
\caption{The direct-solver advantage disappears as exact graph recovery deteriorates.}
\label{fig:supp-crossover}
\end{figure}

\paragraph{A note on routing.}
Routing between the two architectures needs a signal available at inference. The obvious
one, agreement among the candidates' own parsed edge sets, does not supply it. Median
pairwise agreement is 1.00, 1.00, and 0.92 at L1--L3, so a gate fixed in advance at $0.5$
sends every instance to the solver and the hybrid reduces to consensus extraction at all
three levels. The reason is the paper's own mechanism one level down. Traces that misread a
narrative tend to misread it the same way, so agreement stays high exactly where the
consensus graph is wrong, and correlated error defeats voting over edges as it defeats
voting over answers.

\subsection{External logic benchmark}
The mem-kk benchmark of \citet{xie2025memorization} contains dynamically generated,
unique-solution knights-and-knaves
puzzles with compound conjunction, disjunction, implication, and biconditional statements.
The unadapted Qwen 7B model restates each statement in a closed formula grammar and proposes
an assignment. A truth-table engine checks that assignment against the model's formulas.
As a fidelity check, the benchmark-to-formula evaluation code reproduces the published
solution on all 540 generated items.

\begin{table*}[t]
\centering
\small
\setlength{\tabcolsep}{3.4pt}
\begin{tabular}{crrrrr}
\toprule
People & Parse & Fidelity & Vote & CALVER & Gain (95\% CI) \\
\midrule
3 & .97 & .72 & .50 & .77 & $+.27$ [.20, .34] \\
4 & .96 & .70 & .26 & .63 & $+.37$ [.29, .44] \\
5 & .92 & .45 & .22 & .38 & $+.16$ [.09, .23] \\
\bottomrule
\end{tabular}
\caption{mem-kk, 180 puzzle--level units per size, $K=8$, three seeds, no task-specific
fine-tuning. Fidelity is exact agreement of the restated formula set with the benchmark
formula set.}
\label{tab:supp-memkk}
\end{table*}

The result separates parsing from semantic fidelity. The model emits a parseable formula
block on more than 92\% of items at every size, yet exact formalization falls to 45\% for
five people. The corresponding selector gain falls from 27--37 points to 16. Thus a
well-formed intermediate representation earns its place only when it preserves the formal
content the checker consumes.

\subsection{Independent DoVerifier selection}
To test the selection principle with an independent formal engine, we evaluate
candidate-target DoVerifier \citep{he2026doverifier} on a held-out causal-identification
corpus. Each item is gated at construction, before any candidate exists, by requiring the
prover to derive the intended estimand and to fail on the paired distractor. An item that
misses either condition is dropped and counted. All 240 generated items passed, so the
evaluation set is the full corpus and not a surviving subset. DoVerifier then receives
each candidate's own proposed target expression and attempts a proof, without the reference
target. The selector returns the earliest provable candidate, and an item whose pool
contains no provable candidate is scored incorrect.

\begin{table}[t]
\centering
\small
\setlength{\tabcolsep}{3.8pt}
\begin{tabular}{lrrr}
\toprule
Slice & Items & First & DoVerifier selection \\
\midrule
All & 240 & .496 & .800 \\
Mediator trap & 79 & .316 & .646 \\
Unconfounded & 84 & .595 & .821 \\
Confounded & 77 & .571 & .935 \\
\bottomrule
\end{tabular}
\caption{Candidate-target DoVerifier as a target-label-free selector.}
\label{tab:supp-doverifier}
\end{table}

Of the 1,920 candidates, 91.9\% emit a target expression at all and 1,013 admit a proof of
their own target. Selection accuracy equals the rate at which a pool contains at least one
provable candidate on every slice, which is what a sound prover and a fail-closed selector
together predict. The largest gains fall on the mediator-trap and confounded slices, where
a plausible observational expression is not identified. This experiment does not compare
two implementations of one score. It shows that candidate-wise formal verification improves
selection when another proof system supplies the predicate.

\section{Checking the Checker}
\label{sec:audits-supp}

\subsection{Does a high score mean a correct trace?}
Selection accuracy leaves open whether a high score is itself reliable, so we also
measure precision, recall, false-positive rate (FPR), coverage, and AUROC at the score
threshold each task uses. FPR is the fraction of incorrect candidates that reach that
threshold, and coverage is the fraction of all candidates that reach it.

\begin{table*}[t]
\centering
\small
\setlength{\tabcolsep}{2.8pt}
\begin{tabular}{lrrrrrr}
\toprule
Setting & $n$ & Precision & Recall & FPR & Coverage & AUROC \\
\midrule
CLEAR validity & 344 & .955 & .955 & .021 & .267 & .962 \\
ATE maximum deployed score & 2,376 & .913 & .451 & .061 & .246 & .713 \\
\bottomrule
\end{tabular}
\caption{How well a high score predicts a correct trace, at the threshold each task uses.}
\label{tab:supp-quality}
\end{table*}

The CLEAR row covers the audited many-satisfier subset in full, pooled over three seeds,
with no sampling step. Every candidate in that subset is included, and the threshold used
is the validity component. Its correctness label is the graph predicate
recomputed without reading CLEAR's example answer. The ATE row uses
the maximum deployed score before the treatment-exclusion and decision-margin guards. Its
lower recall reflects correct final decisions whose traces omit or fail a typed
intermediate component, and its pre-guard false positives are decomposed below.
Equation~\eqref{eq:supp-increment} identifies the quantity that governs continued
improvement with $K$, namely precision ordered in the score, and the measured curves in
Table~\ref{tab:supp-kcurve} rise at every budget through $K=32$ for all three policies.

\subsection{Graph corruption}
The candidate texts and their answers are held fixed and only the graph handed to the
checker is perturbed, so any change in lift is attributable to the structural information
the check has to work with. Figure~\ref{fig:supp-corruption} shows the result. The decline
is ordered, and it is gradual at the top. Deleting, adding, or reversing a tenth of the
edges leaves most of the benefit in place, and only a random or fully reversed graph costs
more than half of it. Query-local invariance explains why the top of that curve is so
flat. A corrupted edge outside the queried ancestral signature leaves the decision it
supports unchanged.

\begin{figure}[t]
\centering
\includegraphics[width=\columnwidth]{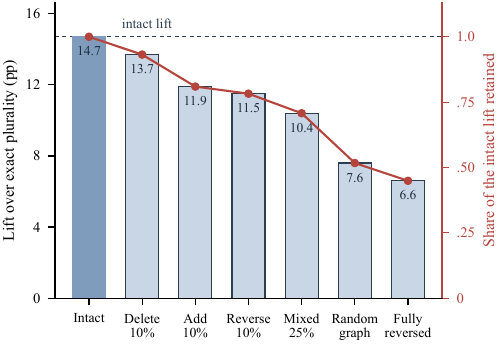}
\caption{Lift over exact plurality on the CLEAR core as the graph given to the checker is
damaged, in percentage points on the left axis and as a share of the intact lift on the
right. The candidate pool is identical in every condition.}
\label{fig:supp-corruption}
\end{figure}

\subsection{Tests of the graph routines themselves}
The graph and numerical primitives are tested independently of language-model output.
Table~\ref{tab:supp-property} gives the final scaled tests. All comparisons use independently
implemented or direct-enumeration references.

\begin{table*}[t]
\centering
\small
\setlength{\tabcolsep}{5pt}
\begin{tabular}{p{.27\textwidth}rrp{.39\textwidth}}
\toprule
Test & Cases & Disagreements & Reference and criterion \\
\midrule
DAG $d$-separation & 150,000 & 0 & NetworkX moralized ancestral implementation \citep{hagberg2008exploring} \\
Typed ADMG $m$-separation & 50,000 & 0 & independently coded active-path traversal \\
Backdoor adjustment vs. intervention & 104 & 0 & enumerated SCMs; maximum numerical error $6.66\times10^{-16}$ \\
Typed CLEAR graph round trip & 2,424 & 0 & nodes and directed/bidirected edges preserved exactly \\
Consistent node renaming & 19,176 candidates & 0 score changes & graph and trace renamed by one bijection \\
\bottomrule
\end{tabular}
\caption{Verifier property tests. These tests exercise the checker without using selection
accuracy as their reference.}
\label{tab:supp-property}
\end{table*}

\subsection{Adversarial trace modifications}
Three families probe whether the score keys on semantic structure or on surface
identifiers. The perturbations are applied to 19,176 scored candidates, of which 6,983
attain the maximum deployed score before perturbation. Consistent node renaming leaves all
19,176 scores unchanged, as an invariance control requires. The two destructive families
are read against the 6,983 maximum-score candidates, since a candidate that already failed
the component being damaged has no further score to lose. Every one of the 6,983 loses the
maximum when its final answer is swapped, and every one of the 508 whose witness path is
damaged loses it as well.

How far each score falls tracks how much semantic content the perturbation destroys.
Swapping the answer while leaving the witness intact moves every affected candidate to
exactly score three. Three components go at once: the answer stops following from the
recomputed result, the result stops surviving recomputation against the stated answer, and
the strategy check now runs on the object the trace returns. Replacing one edge of the witness path
moves every affected candidate to exactly score five, the single structural component that
edge supports. Measured over all scored candidates instead of the maximum-score ones, the
two families lower the score on 0.560 and 0.416 of cases. Most candidates had already lost
the relevant component, and a swapped binary answer can land on the right value.

\subsection{Which components carry the numeric ATE result}
A component ablation on frozen ATE pools separates structural formatting from numerical and
answer consistency. Using graph, query, strategy, and derivation components alone changes
selection over vote by 0.0 points at 1.5B and 0.2 at 7B. The complete score changes it by
1.6 and 3.8 points on those two broad pools. The main paper therefore treats ATE as a
certificate-calibration experiment. The broad ATE pools serve as numerical-score
diagnostics, and the main selected-answer results come from the executable
graph-validity tasks and the constructed-structure experiments.

\subsection{Strict ATE decision-margin audit}
\label{sec:calibration-supp}

The strict ATE experiment calibrates the numerical checks when the complete binary SCM
is part of the problem. The task asks whether the ATE exceeds the decision threshold
$\tau=0$, and the numerical tolerance is fixed at $\varepsilon=0.02$. The selector receives
neither a stored answer nor a precomputed effect. For each candidate it checks the proposed
adjustment strategy, recomputes both interventional means from the supplied CPTs, forms
their difference, and verifies the reported threshold decision.

Table~\ref{tab:supp-ate-margin} gives the complete margin decomposition for the frozen
1.5B verifier-GRPO pool. The gain rises as the target decision moves away from the
boundary. Inside the declared tolerance band, numerical agreement at $\varepsilon$ no
longer establishes that the reported and exact effects fall on the same side of the
threshold, so the strict audit marks those traces uncertified. The deployed six-component
selector still returns its ordinary maximum-score answer there, and the table reports its
accuracy so that the calibration can be read across the whole margin range.

\begin{table*}[t]
\centering
\small
\setlength{\tabcolsep}{3.4pt}
\begin{tabular}{lrrrr}
\toprule
$|\theta|$ stratum & $n$ & Vote & CALVER & $\Delta$ \\
\midrule
$[0,.02)$ & 48 & .688 & .667 & $-.021$ \\
$[.02,.05)$ & 39 & .564 & .641 & $+.077$ \\
$[.05,.15)$ & 96 & .448 & .594 & $+.146$ \\
$[.15,1]$ & 114 & .491 & .640 & $+.149$ \\
\midrule
Reported margin stratum, $|\theta|\geq .02$ & 249 & .486 & .622 & $+.137$ \\
\bottomrule
\end{tabular}
\caption{Strict ATE selection by decision margin. The reported margin-stratum interval for the
13.7-point gain is [7.2, 20.1] points, clustered by problem.}
\label{tab:supp-ate-margin}
\end{table*}

Across 2,376 candidate traces, 585 attain the maximum deployed score, and 534 of those
already carry the correct qualitative decision before any strict guard is applied. The 51
that do not divide cleanly. Forty-five sit inside the declared decision band, and the
remaining six place the treatment in its own adjustment set, so the treatment-exclusion
guard rejects them. No third failure class appears in this audit, which is what makes the
two guards in Theorem~\ref{thm:supp-sound} exhaustive on the data as well as sufficient in
the proof.

At the maximum score, and before the theorem's guards are applied, precision is .913,
recall .451, the false-positive rate .061, and coverage .246. This is the role the ATE
experiment plays in the paper. It is a direct
calibration of the numerical score and strict guards, complementary to the external many-satisfier
benchmarks where the principal quantity is selected-answer accuracy.

\subsection{Where the mechanism predicts no gain}
The fragmentation account is falsifiable in one direction. Where a query has essentially one
correct answer, no valid probability mass is available to fragment, and a validity check has
nothing to recover that voting has already lost. CLadder \citep{jin2023cladder} and
Corr2Cause \citep{jin2024corr2cause} test that prediction, since both are evaluated through
a binary interface. Across nine frozen $K=8$ pools spanning three policies and both prompt
formats, exact plurality and the symbolic selector stay within two points of each other in
every configuration. None of the differences is distinguishable under the
problem-clustered bootstrap used elsewhere. This is the predicted null, and it is reported
as one.

\section{Policy and Training Controls}
\label{sec:training-supp}

\subsection{Voting regressions without using a verifier}
The claim that plurality can hurt does not rely on CALVER scores. For each of 18 policy
configurations and five causal query types, we compare the first candidate with exact
plurality over eight candidates. Plurality is less accurate in 31 of the 90 cells.
Table~\ref{tab:supp-majority-hurts} gives the complete count by query type. The pattern is
not confined to one causal rung. It appears on observational, interventional, and
counterfactual queries.

\begin{table}[t]
\centering
\small
\setlength{\tabcolsep}{5pt}
\begin{tabular}{lrr}
\toprule
Query & Policies tested & Plurality below first \\
\midrule
ATE & 18 & 7 \\
ETT & 18 & 9 \\
Marginal & 18 & 4 \\
Conditional & 18 & 9 \\
Probability of necessity & 18 & 2 \\
\midrule
Total & 90 cells & 31 \\
\bottomrule
\end{tabular}
\caption{Exact-plurality regressions at $K=8$ relative to the first sample. This table is
independent of every verifier component. ATE is the average treatment effect; ETT is the
effect of treatment on the treated.}
\label{tab:supp-majority-hurts}
\end{table}

The fragmentation theorem gives the corresponding explanation. Repeated sampling
concentrates plurality on the most common answer atom, even when the combined probability
of several less common valid atoms is larger. A validity-aware selector instead pools those
atoms through their shared semantic predicate. The motif experiment in
Table~\ref{tab:supp-motifs} tests this mechanism directly. Gains are largest when the graph
admits multiple minimal adjustment sets or valid supersets.

\subsection{Separating the checker from the training recipe}
The clean-core experiment spans unadapted and supervised generators, verifier-guided
GRPO, NF4 quantization, larger Qwen models, and the Mistral family. The checker, score
weights, and tie rule remain fixed. The checker improves the point estimate for each
policy, so the inference method stands on its own without verifier-guided post-training.

Schema adaptation governs how much of that signal the checker can reach. In
Table~\ref{tab:supp-policy} the unadapted 7B policy carries the smallest advantage over
set-aware aggregation and the only interval that includes zero. Every adapted or larger
policy carries a substantially wider margin, with an interval excluding zero. A
policy that emits the six slots more reliably supplies more scoreable traces, and the
decision procedure is identical in every row.

Training changes the supply and format of candidates. SFT increases adherence to the
six-slot format, and policy optimization changes the distribution and diversity of
traces. Because selection operates on frozen pools, policy changes and verifier changes
can be evaluated separately.

\subsection{Quantization and model-family controls}
The 7B NF4 policy reaches .384 under the deployed tie rule and .401 under best-case
ties, compared with .177 for voting. Nemo-12B reaches .474 and .481 under the same two
rules, compared with .275 for voting. The Qwen 14B and 32B rows reach .685 and .624 under
the deployed rule. The effect appears across precision formats, parameter scales, and two
backbone families.

The two NF4 rows also measure what 4-bit quantization costs at each scale, with the
adapter, prompt, and decoding settings held fixed. At 7B the cost is real. Against the
full-precision policy the first sample falls from 23.6\% to 14.0\%, candidate coverage from
50.9\% to 40.1\%, and CALVER from 43.8\% to 38.4\%. A quantized 7B simply supplies fewer
usable traces. At 32B it is close to free, with CALVER moving
from 62.4\% to 61.9\% and coverage from 69.0\% to 68.0\%. The margin over set-aware
aggregation holds at both scales and widens at 7B, because the checker still finds the
valid candidates that survive quantization. Under the evaluated prompt and decoding
configuration, NF4 at 32B preserves nearly all of the measured selection gain.

\section{Reproducing the Results}
\label{sec:repro-supp}

\subsection{Model and sampling configuration}
The principal generator family is Qwen2.5-Instruct \citep{qwen2024qwen25}. Trainable
adapters use LoRA \citep{hu2022lora} on the $q$, $k$, $v$, and $o$ attention projections,
with rank 64 and scaling parameter 128. The core 7B SFT policy is trained for
approximately 1,000 steps on canonical six-slot
traces with AdamW, learning rate $5\times10^{-5}$, and effective batch size 16.
Verifier-guided policies use group-relative updates with a frozen reference model and the
same output format \citep{shao2024deepseekmath}. NF4 adapters follow the QLoRA
quantization setup \citep{dettmers2023qlora}. The inference method itself does not
require that training objective.

The main $K$ curves use
$K\in\{1,2,4,8,16,32\}$ and temperature .8. External runs retain the sampling parameters
stored in each run manifest. CausalGraph2LLM uses $K=16$, and the principal external
find-one-valid experiments use $K=8$. Candidate order is preserved from generation through
selection.

Training ran in bfloat16 on an academic cluster containing A100 40GB and H100 80GB GPUs.
Approximate wall-clock cost was three GPU-hours per seed at 1.5B and twelve GPU-hours per
seed at 7B. A complete 500-problem, $K=8$ evaluation takes about 45 minutes at 1.5B and two
hours at 7B per policy and seed. Symbolic scoring takes 1--8 ms per candidate on one CPU
thread, so it adds little cost beyond the samples already generated for voting.

\subsection{Frozen pools and what selectors receive}
Every selector comparison follows the same five steps:
\begin{enumerate}
\item Generate and freeze the ordered candidate pool.
\item Export a selector view containing the problem text, input-visible graph or SCM,
      candidate trace, model scores when available, and candidate index.
\item Remove benchmark answers, correctness flags, canonical adjustment sets, and any
      analysis-only field from that view.
\item Run all deployable selectors on the identical exported pool.
\item Reattach evaluation labels only after each selector has returned an index.
\end{enumerate}
An automated check rejects any comparator input still carrying
fields such as \texttt{correct}, \texttt{gold}, \texttt{answer\_key}, or a benchmark target.
The independent DoVerifier experiment receives a candidate expression as its proof target;
it never receives the benchmark's intended terminal expression.

\subsection{What each number counts, and how intervals are formed}
Accuracy is computed at the problem--seed unit unless a table explicitly reports
candidate-level classifier statistics. Paired differences are formed within each frozen
pool. Headline confidence intervals use 10,000 clustered bootstrap resamples of source
problems, and exploratory decompositions use 2,000. Paraphrases, encodings, policies, and
seeds derived from the same source problem stay in one cluster.

Set-valued answers are canonicalized only for exact voting. The Jaccard-medoid baseline
selects the candidate set with maximum mean Jaccard similarity to the other sets.
Correctness is always evaluated by the task predicate, never by equality with a listed
example.

\subsection{Code and data}
The Code and Data Package is being prepared for public release. It will contain the source
records, generated candidates with their per-candidate scores, and analysis scripts under
paths relative to its root. The typed graph parser and strict ATE verifier will sit in
\path{lib/}. The clean-core scorer, same-pool comparison, $K$-scaling report, tie-rule
ablation, set-aware baselines, and clustered paired bootstrap will sit in \path{scorers/}.
The graph-primitive property tests, extraction crossover, and routing check will sit in
\path{diagnostics/}. Each result directory will record the model and adapter identifiers,
prompt version, quantization configuration, sampling parameters, seed, corpus hash, and
scorer version for the run that produced it. The source manifest will record the external
repositories used to reconstruct CausalGraph2LLM and DoVerifier. Per-candidate comparator
scores will be stored beside the traces they score, allowing the selector comparisons in
this document to regenerate on CPU without model inference.

\subsection{Prompt formats}
For causal graph tasks, the system instruction requests exactly six named slots. The graph
slot preserves edge type. The query slot binds the named variables. The strategy slot holds
a machine-readable set, path, or intervention. The derivation record names the operation and
its arguments. The result slot holds the recomputed predicate or quantity, and the final
answer holds only the requested object or decision. The
parser accepts harmless formatting variation but rejects missing, duplicated, or conflicting
slots.

For graph-from-text tasks, STEP 1 additionally asks the model to state a directed edge list
over the variables named by the prompt. No source edge list is inserted into the verifier.
For the K\&K benchmark, the analogous slot is a closed-grammar block named
\texttt{CONSTRAINTS}. The truth-table engine verifies the candidate assignment against
that block. In both domains the model exposes a formal object, and the selector checks
whether the proposed answer follows from it.

\section{What These Results Do and Do Not Establish}
\label{sec:scope-supp}

The experiments differ in how much formal structure the problem hands over, and the
strength of the claim differs with it.

One result is proved rather than measured. When the supplied DAG and observational
distribution are correct and positivity holds, a strict ATE certificate implies the correct
threshold decision. Theorem~\ref{thm:supp-sound} establishes this, the two constructions
after it show that neither guard can be dropped, and the strict ATE audit locates every
incorrect maximum-score trace inside one guard or the other. The guarantee is stated for a
supplied graph and is not extended to graphs a candidate constructs for itself.

The rest is measured. Where the graph is supplied and the query admits several valid
answers, executable graph validity improves selection whenever the frozen pool holds
candidates that differ in validity. The CLEAR core, the \texttt{bnlearn} networks, the
independent adjustment-set benchmark, and the same-pool comparison all show this. Where
the structure has to be read out of text, candidate-local checking still improves
selected-answer accuracy even when whole-graph recovery is far from exact. The prose
ladder, the construction motifs, and the hand-written narratives measure that effect, and
the query-local analysis explains it. Where a single extraction is dependable, extracting
one graph
and solving it outright is the better architecture, and the crossover study marks where that
switch happens. Where the answer space is compact, little improvement should be expected and
little is found, which is what CLadder, Corr2Cause, and the CausalGraph2LLM intervention
slice contribute. Outside causal graphs entirely, the same candidate-wise idea carries over
once another executable predicate takes the place of graph validity, as the truth-table
check on Knights and Knaves and the independent do-calculus prover both show.

Two limits are worth stating plainly. The method selects among proposed traces and offers
no opinion on whether a real-world causal graph is scientifically correct. And in every
text-derived setting the source graph is kept outside selection and used only to measure
how well candidate-local checking transfers to the intended structure. The selected trace,
its formalization, and its component scores are all retained, so any individual decision
can be re-examined.

\section{Summary}
In short, exact voting can discard valid
reasoning when probability mass is divided across several answer strings. On frozen pools,
candidate-wise executable validity provides a stronger non-oracle selection signal than
surface frequency, model confidence, generic reward models, and reference-free judges.
The effect appears across policies, graph families, text-derived structures, an independent
do-calculus prover, and a propositional logic domain. The formal results characterize the
strict ATE guarantee, the fragmentation mechanism, finite-$K$ selection, and the
query-local graph information needed for transfer.

\section{Statement}
Language models were used while preparing this submission, to edit and polish text and code the authors had written. The results and the analysis are the authors' own. Language models also appear throughout the work as the systems under study, which the experimental sections
describe.

\FloatBarrier
\bibliography{supplement}

%% file: Chapters/0_Abstract.tex
\begin{abstract}
Self-consistency assumes the most frequent answer among sampled reasoning traces is the most reliable, but this can fail in causal reasoning: samples often repeat the same confounding error, and votes fragment across multiple valid answers, letting an invalid answer win despite a valid minority trace. We introduce CALVER (Causal Axiom-Level VERification), a training-free symbolic verifier that scores structured traces against Pearl's causal criteria, including $d$-separation, backdoor adjustment, and intervention, and selects the highest-scoring candidate without consulting a reference answer. On CLEAR find-one-valid queries that admit multiple graph-valid answers, CALVER reaches 42.1\% where plurality, a reward model, an LLM judge, and model confidence remain near 30\% on identical frozen pools. Scaling the judge to 72B does not close the gap. In an audited clean-core subset, 11 of 21 graph-valid CALVER selections differ from the benchmark's listed answer while still satisfying the requested predicate. The advantage widens with the sampling budget and reproduces across ten published Bayesian networks, a second model family, and settings where the model must build the graph from text. CALVER also improves thresholded average-treatment-effect decisions against exact ground truth, generalizes to logic under a truth-table checker, and scores each candidate in milliseconds on CPU. CALVER needs only a causal structure, supplied outright or built from the text; wherever that holds, selection can aggregate via causal validity.
\end{abstract}

%% file: Chapters/1_Introduction.tex
\section{Introduction}
Chain-of-thought (CoT) prompting elicits explicit reasoning \citep{wei2022chain} from large language models (LLMs). Building upon CoT, many test-time methods have been developed to solve complex reasoning problems by scaling up inference-time compute. Self-consistency samples several traces and returns the most frequent answer \citep{wang2022self}. Repeated sampling and sample selection are now common ways to spend that compute \citep{snell2024scaling,brown2024large}. These sampling methods work when correct traces concentrate on one answer more strongly than incorrect traces. However, for causal reasoning tasks, exact agreement becomes sparse when several outputs are viable \citep{wang2024soft} because multiple outputs can be fully correct. For example, a query asking for any set that satisfies Pearl's backdoor criterion may admit $\{Z_1\}$, $\{Z_2\}$, and $\{Z_1,Z_2\}$ as valid adjustment sets \cite{chen2024clear}. Several conditioning sets may likewise $d$-separate the same variables \citep{pearl2009causality,peters2017elements}. In causal reasoning problems, these valid answers split the vote. A familiar invalid answer can then become the single largest mode, and additional samples reinforce it.
\begin{figure*}[htb]
\centering
\includegraphics[width=0.94\textwidth]{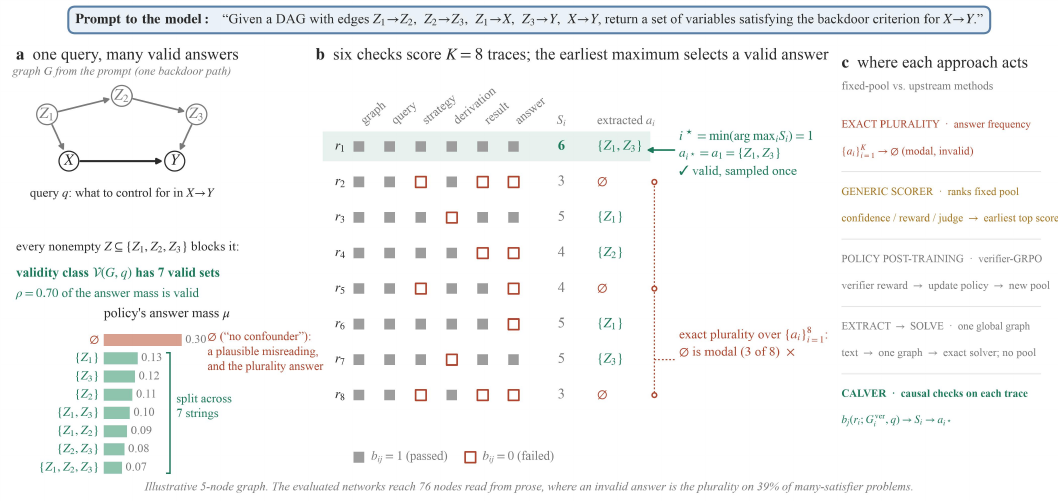}
\caption{\textbf{CALVER as a best-of-$K$ selector.} \textbf{(a)} In the illustrative graph, every nonempty subset of $\{Z_1,Z_2,Z_3\}$ satisfies the backdoor criterion for the effect of $X$ on $Y$. Valid probability mass $0.70$ is split across seven strings, while the invalid answer $\varnothing$ is the single largest mode. \textbf{(b)} Six deterministic checks score $K=8$ sampled traces; plurality returns $\varnothing$, whereas CALVER returns the earliest maximum-score valid trace. \textbf{(c)} Voting and generic scorers rank a fixed pool, verifier-guided post-training changes the policy, and extract-then-solve bypasses the pool. The evaluated networks reach 76 nodes. Theorem~\ref{thm:sound} establishes soundness for the strict ATE certificate defined in the Method.}
\label{fig:overview}
\end{figure*}

To address this, we propose \textbf{C}ausal \textbf{A}xiom-\textbf{L}evel \textbf{VER}ification (\textbf{CALVER}) to improve inference-time scaling for causal reasoning tasks. CALVER uses a key property of causal inference: the graphical criteria that create answer multiplicity also determine whether a candidate belongs to the query's \emph{validity class}. For the causal models, queries, and available observational information assumed here, validity is algorithmically decidable through standard graph computations: ancestral restriction, $d$-separation and its mixed-graph extension $m$-separation, intervention-graph surgery (deleting incoming arrows into intervened variables), and the backdoor test \citep{pearl2009causality,peters2017elements,richardson2003markov}. We compile these operations into the selection rule itself. CALVER samples $K$ traces in a typed six-slot schema, checks each slot deterministically against the causal specification, selects the earliest trace attaining the maximum score, and returns its extracted answer (Figure~\ref{fig:overview}). The checker is training-free and independent of the sampling policy.

The same graph computations that power CALVER also support an alternative: construct one formal object and solve the query exactly. Our crossover study compares these two architectures. Direct solving is strongest when one reliable graph can be extracted. Candidate-wise verification remains effective when interpretations vary across traces and different candidates preserve different query-relevant relations. CALVER targets this intermediate regime and provides an empirical map of when each architecture is preferable.

Our contributions are:
\begin{enumerate}
\item \textbf{Formulation.} We formulate best-of-$K$ selection under answer multiplicity as aggregation over a decidable validity class. CALVER instantiates this idea with a six-component graph-dependent trace score for directed acyclic graphs (DAGs) and acyclic directed mixed graphs (ADMGs).
\item \textbf{Theory.} We prove soundness of a strict average-treatment-effect (ATE) certificate for thresholded ATE decisions and a validity-fragmentation theorem that explains why exact plurality can become inconsistent. We also characterize the query-local graphical information needed for constructed-graph verification.
\item \textbf{Experiments.} On the same inference traces, CALVER outperforms exact plurality, a set-aware medoid, model confidence, a learned reward model, and an LLM judge. The advantage grows through $K=32$ and transfers across graph families, graph representations, model families, an independently implemented do-calculus prover, and a formal logic task.
\end{enumerate}

%% file: Chapters/2_Motivation.tex
\section{Related Work}

CALVER intersects three lines of work. 

\paragraph{Inference-time selection.}
Self-consistency aggregates independently sampled traces by exact answer frequency \citep{wang2022self}. Soft self-consistency replaces exact counts with likelihood-based aggregation when viable outputs rarely match verbatim \citep{wang2024soft}. Semantic clustering and universal self-consistency similarly aggregate by meaning or use an LLM to select among free-form candidates \citep{kuhn2023semantic,farquhar2024detecting,chen2023universal}. Learned aggregators can review and synthesize candidate solutions after reinforcement learning with verifiable rewards \citep{zhao2025majority}. These alternatives relax exact string matching, but their evidence remains distributional agreement or a learned preference, and none checks satisfaction of a task axiom. Several distinct outputs can satisfy the same formal criterion, and consensus then penalizes correctness. CALVER applies the fixed executable predicate directly to each candidate, allowing a valid trace to be recognized without matching another candidate's answer string.

\paragraph{Learned and symbolic verification.}
Outcome verifiers and process reward models improve mathematical reasoning by ranking candidate solutions \citep{cobbe2021training,uesato2022solving,lightman2024let,wang2024math}. Symbolic systems use execution or deduction to check a model's formal output, including LEVER for code, SatLM for satisfiability, Logic-LM and LINC for logical inference, and AlphaGeometry for geometry \citep{ni2023lever,ye2023satlm,pan2023logic,olausson2023linc,trinh2024solving}. Analyses of best-of-$K$ with imperfect verifiers show that performance depends on candidate coverage and the verifier's full error profile \citep{stroebl2024inference,dorner2025roc,huang2025best}. CALVER brings executable verification to causal candidate selection through a typed trace contract and a target-label-free rule. Comparing fixed, learned, and prompted scorers on identical candidate traces tests whether causal axioms provide selection signal unavailable to generic verifiers.

\paragraph{Causal reasoning and causal proof checking.}
CLadder, Corr2Cause, CLEAR, and CausalGraph2LLM evaluate language models on causal questions and graph understanding \citep{jin2023cladder,jin2024can,chen2024clear,sheth2025causalgraph2llm}. DoVerifier checks whether a proposed causal expression is derivable from a graph using do-calculus and probability rules \citep{he2026uncovering}. Its published evaluation is target-directed: the expression to prove is supplied by the benchmark. We instead use it reference-free, treating each sampled expression as its own proof target and selecting the earliest provable candidate. This is candidate-wise selection with an independent prover, and it corroborates our principle without reusing our checker. A separate baseline, extract-then-solve, builds one global graph and solves the query exactly, bypassing candidate selection \citep{pearl2009causality,peters2017elements,richardson2003markov,shpitser2008complete}. Our crossover study measures when that route is preferable. CALVER differs from both by scoring a full structured trace: graph binding, strategy validity, recomputation, and answer consistency.

%% file: Chapters/3_Approach.tex
\section{Method}
\paragraph{Setting and trace contract.}
CALVER changes what best-of-$K$ aggregates. It scores each sampled trace against an executable causal specification. This section defines the objects, checks, and deterministic selection rule that implement that contrast.

Here, a problem contains a query $q$ and either a supplied source graph $G^\star$ or a text description generated from a source graph $G^\star$. In the text case, $G^\star$ is never exposed to any selector. It is retained outside selection for predeclared corpus construction checks, when applicable, and for final grading. The \emph{policy} (the language model under evaluation) produces $K$ independent traces $r_1,r_2, r_3,\ldots,r_K$ with extracted answers $a_i=a(r_i)$. The $K$ traces sampled for one problem form its candidate \emph{pool}. ATE problems also supply an observational distribution $P$ and a threshold $\tau$ (asking whether the effect exceeds $\tau$). 

Each output is judged by whether it belongs to the full set of graph-valid answers for the query. For graph $G$ and query $q$, let $\mathcal V(G,q)$ denote this \emph{validity class}. On a find-one-valid query, which asks for any single valid object, $a_i$ is correct when $a_i\in\mathcal V(G^\star,q)$. A dataset's listed answer is only one possible member.

\paragraph{Two modes.} Reflecting the two problem formats, verification runs on one of two graphs. In supplied-graph mode, every trace is scored on the source graph. In constructed-graph mode, used when structure must be read from text, trace $i$ emits its own graph $\widehat G_i$ and is scored on it, while $G^\star$ stays hidden until selection is complete. Write $G_i^{\mathrm{ver}}$ for the verification graph of trace $i$: $G_i^{\mathrm{ver}}=G^\star$ in the first mode and $G_i^{\mathrm{ver}}=\widehat G_i$ in the second. In constructed-graph mode, the score measures validity relative to $\widehat G_i$, whereas grading uses $G^\star$. CLEAR and the supplied-graph \texttt{bnlearn} rows use supplied-graph mode, and the graph-from-text causal studies use constructed-graph mode. The K\&K study follows the same candidate-local design, with a trace-specific propositional formalization checked by a truth-table engine in place of a causal graph.

In either mode, the scored object is the same. Each trace follows a typed schema, the trace contract, with six slots: \emph{graph}, \emph{query}, \emph{strategy}, \emph{derivation record}, \emph{computed result}, and \emph{answer}.  The schema makes intermediate claims machine-readable without constraining the surrounding rationale. The derivation-record check verifies provenance and format only. The graph, query, strategy, computed-result, and answer checks carry the certificate's semantic content.

Each task family has its own validity check, fixed before evaluation: backdoor adjustment, typed conditional independence, mediator witnesses, intervention reachability, or the numeric ATE check. Constructed-graph mode runs the same checks on $\widehat G_i$, the graph trace $i$ proposes.

\paragraph{Graph-dependent verification.}
Scoring is then deterministic: for trace $i$, the verifier returns bits
$b_{ij}=b_j(r_i;G_i^{\mathrm{ver}},q)\in\{0,1\}$ and score
$S_i=\sum_{j=1}^{6}b_{ij}$. The six bits audit the trace's inferential chain in order (Table~\ref{tab:criteria}). Missing or duplicate slots fail their corresponding components. We call $S_i=6$ the \emph{maximum deployed score}: all six checks pass relative to $G_i^{\mathrm{ver}}$. The strict ATE result below adds treatment exclusion and decision margin guards to certify the final threshold decision.

To make this more concrete, consider a backdoor query on graph $G$. Let $\De_G(X)$ be the strict descendants of $X$ and let $G_{\underline X}$ delete arrows leaving $X$. The deployed strategy bit tests
$X\perp_d Y\mid Z$ in $G_{\underline X}$ \citep{pearl2009causality,peters2017elements}. For an ADMG, the corresponding typed $m$-separation routine is used \citep{richardson2003markov}. For binary treatment and outcome, define the causal ATE
$\theta(P;X,Y)=\mathbb E_P[Y\mid\doop(X{=}1)]-\mathbb E_P[Y\mid\doop(X{=}0)]$.
Given an accepted adjustment set $Z$, the verifier evaluates the standard adjustment functional
\begin{equation}
\begin{aligned}
\psi(P;X,Y,Z)=\sum_z &\bigl[P(Y{=}1\mid X{=}1,Z{=}z)\\
&-P(Y{=}1\mid X{=}0,Z{=}z)\bigr]P(Z{=}z)
\end{aligned}
\label{eq:adjust}
\end{equation}
Let $\widehat\theta_i$ be the ATE value reported in trace $i$'s computed-result slot. The numerical bit independently recomputes $\psi(P;X,Y,Z)$ and requires
$|\widehat\theta_i-\psi(P;X,Y,Z)|\leq\varepsilon$, where $\varepsilon$ is declared before evaluation. The answer bit requires agreement with the threshold decision based on $\widehat\theta_i$.

For the soundness analysis, we reserve the term \emph{strict ATE certificate} for a maximum-score ATE trace that also satisfies
\begin{equation}
\begin{aligned}
Z\cap\bigl(\{X,Y\}\cup\De_G(X)\bigr)&=\varnothing,\\
X\perp_d Y\mid Z &\text{ in } G_{\underline X}
\end{aligned}
\label{eq:strictbackdoor}
\end{equation}
and whose recomputed effect satisfies
$|\psi(P;X,Y,Z)-\tau|>\varepsilon$. These guards use only the graph, observational distribution, query, and candidate trace.

The selector is therefore
$i^\star=\min\!\left(\argmax_{1\le i\le K}S_i\right)$ and returns $a_{i^\star}$. Score ties go to the earliest trace, while plurality ties go to the answer whose first occurrence is earliest. These rules are fixed before evaluation, and every comparison uses the same frozen candidates. A mechanical firewall removes correctness and benchmark-reference fields before any scorer runs. They are reattached only after the selected index is fixed. At $K=8$, symbolic verification adds about 7\% over the generation cost already paid by plurality, with each trace scored in 1--8\,ms on CPU.

\begin{table*}[htb]
\centering
\footnotesize
\renewcommand{\arraystretch}{1.12}
\begin{tabular}{@{}c l >{\raggedright\arraybackslash}p{0.30\textwidth} >{\raggedright\arraybackslash}p{0.36\textwidth}@{}}
\toprule
\textit{Bit} & \textit{Slot} & \textit{Question the check answers} & \textit{Executable test} \\
\midrule
$b_{i1}$ & \textsc{graph} & Does the trace's graph parse and bind to the instance? & Parse; bind to $G_i^{\mathrm{ver}}$ ($=G^\star$ supplied; $=\widehat G_i$ constructed) \\
$b_{i2}$ & \textsc{query} & Does the stated query bind to the asked one? & Parse; bind $(X,Y,\text{task})$ to the supplied query $q$ \\
\addlinespace
$b_{i3}$ & \textsc{strategy} & Is the proposed object graphically valid? & Backdoor: $X\perp_d Y\mid Z$ in $(G_i^{\mathrm{ver}})_{\underline X}$; ADMGs use typed $m$-separation \\
$b_{i4}$ & \textsc{derivation record}
& Does the derivation use the declared typed operation? & Provenance and format only \\
$b_{i5}$ & \textsc{computed result} & Does the reported value survive independent recomputation? & ATE: $|\widehat\theta_i-\psi(P;X,Y,Z)|\le\varepsilon$, with $\psi$ from \eqref{eq:adjust} and $\varepsilon$ declared in advance \\
$b_{i6}$ & \textsc{answer} & Does the final answer follow from the result? & Non-ATE: agreement with the recomputed result. ATE: agreement with $\mathbf{1}\{\widehat\theta_i>\tau\}$ \\
\addlinespace
\midrule
\multicolumn{2}{@{}l}{\textit{Score}} & \multicolumn{2}{l@{}}{$S_i=\sum_{j=1}^{6} b_{ij}$;\quad maximum deployed score $\;S_i=6$;\quad select $i^\star=\min\bigl(\argmax_i S_i\bigr)$} \\
\bottomrule
\end{tabular}
\caption{The six deterministic checks of the trace contract. Each bit $b_{ij}\in\{0,1\}$ is computed from the trace and its verification graph alone. No target label is consulted, and missing or duplicate slots fail their components.}
\label{tab:criteria}
\end{table*}

%% file: Chapters/3a_Theory.tex
\section{Theory}
The theory addresses three questions: when a maximum-score ATE trace certifies a correct threshold decision; why exact plurality can fail when valid probability mass is divided across answer strings; and which graph relations must survive text-based reconstruction. Full proofs are provided in the Supplementary document.

\begin{theorem}[Strict ATE certificate soundness]
\label{thm:sound}
Assume that the supplied DAG $G^\star$ and observational distribution $P$ are correct and that $P$ is positive. Then any binary ATE trace satisfying the strict ATE certificate defined in the Method has the correct final threshold decision. The verifier does not use a target label.
\end{theorem}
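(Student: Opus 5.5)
The plan is to reduce the statement to two facts: an identification step equating the recomputed functional with the true effect, and a margin step converting the $\varepsilon$-accurate report into the correct side of $\tau$. The setting is that of Assumption~\ref{ass:supp-contract}. The supplied $(G^\star,P)$ come from a Markovian SCM, $P(x,z)>0$ wherever $P(z)>0$, and the final answer is ``Yes'' exactly when $\theta>\tau$.

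\emph{Identification step.} The strict certificate includes the two conditions of \eqref{eq:strictbackdoor}, which together are exactly $\mathsf{BD}_{G^\star}(X,Y,Z)=1$ in \eqref{eq:supp-bd}. I would then invoke Lemma~\ref{lem:supp-identification}, which gives $\psi(P;X,Y,Z)=\theta$. If I had to reprove it, the route is as follows.
\begin{itemize}
\item Condition on $Z$ under the intervention.
\item Use $Z\cap\De(X)=\varnothing$ to replace $P(z\mid\doop(x))$ by $P(z)$.
\item Use $X\perp_d Y\mid Z$ in $G_{\underline X}$, via do-calculus rule 2, to exchange $\doop(x)$ for the observation $x$.
\item Use positivity so that every conditional appearing in $\psi$ is defined.
\end{itemize}
The exclusion of $X$ and $Y$ from $Z$ is also what makes the recomputed $\psi$ the intended functional, rather than a degenerate one in which the treatment arms are overwritten.

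\emph{Margin step.} The trace attains the maximum deployed score, so bit $b_5$ gives $|\widehat\theta-\psi|\le\varepsilon$, and hence $|\widehat\theta-\theta|\le\varepsilon$. The certificate's margin condition gives $|\theta-\tau|=|\psi-\tau|>\varepsilon$. I would split into two cases.
\begin{itemize}
\item If $\theta>\tau$, then $\widehat\theta\ge\theta-\varepsilon>\tau$.
\item If $\theta<\tau$, then $\widehat\theta\le\theta+\varepsilon<\tau$.
\end{itemize}
The case $\theta=\tau$ cannot occur, because $|\theta-\tau|>\varepsilon\ge0$. So $\mathbf{1}\{\widehat\theta>\tau\}=\mathbf{1}\{\theta>\tau\}$. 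Bit $b_6$ forces the explicit answer to equal $\mathbf{1}\{\widehat\theta>\tau\}$, so the answer is correct.

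\emph{Label independence.} Every check (parsing, binding, separation, recomputing $\psi$ from $P$, and the margin test) is a function of $(G^\star,P,q,\tau,\varepsilon)$ and the trace alone. No target label enters, which gives the final sentence of the statement. The only substantive step is identification, and that is discharged by Lemma~\ref{lem:supp-identification}. The main point requiring care is bookkeeping: checking that ``positive'' in the statement supplies the positivity clause the lemma needs, and that the strict certificate's guards are literally $\mathsf{BD}=1$.
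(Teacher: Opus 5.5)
Your proposal is correct and follows the paper's proof essentially step for step. The strict guards give $\mathsf{BD}_{G}(X,Y,Z)=1$, Lemma~\ref{lem:supp-identification} yields $\psi=\theta$, and the $\varepsilon$-bound from the computed-result bit together with the margin $|\theta-\tau|>\varepsilon$ fixes the side of $\tau$, which the answer bit then reports (your extra remarks ruling out $\theta=\tau$ and matching ``positive'' to the lemma's positivity clause are bookkeeping the paper leaves implicit).
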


\noindent\emph{Proof sketch.}
The treatment-exclusion and $d$-separation conditions in the strict ATE certificate make $Z$ a valid backdoor adjustment set. Positivity and the adjustment formula therefore give
$\psi(P;X,Y,Z)=\theta(P;X,Y)$. The maximum deployed score ensures
$|\widehat\theta_i-\psi(P;X,Y,Z)|\leq\varepsilon$ and requires the final answer to agree with the threshold decision based on $\widehat\theta_i$. The strict decision margin keeps $\widehat\theta_i$ and the true ATE on the same side of $\tau$. The reported qualitative decision is therefore correct.

\begin{theorem}[Validity fragmentation]
\label{thm:frag}
Let $A_1,\ldots,A_K$ be i.i.d.\ answers from a finite mass function $\mu$, with validity class $\mathcal V$. Suppose one invalid answer $w$ exceeds the mass of every \emph{individual} supported valid answer by at least $\delta_\mu>0$:
$\mu(w)\geq\mu(v)+\delta_\mu$ for every supported $v\in\mathcal V$. Then, for any deterministic plurality tie rule:
\begin{equation}
\Pr\{\operatorname{plurality}(A_{1:K})\in\mathcal V\}
\leq |\mathcal V_+|\exp(-K\delta_\mu^2/2)
\end{equation}
where $\mathcal V_+=\{v\in\mathcal V:\mu(v)>0\}$.
\end{theorem}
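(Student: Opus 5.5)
The plan is to reduce the event that plurality returns a valid answer to a union of pairwise count comparisons, and to control each comparison with a Hoeffding bound. Write $N_a=\sum_{i=1}^{K}\mathbf 1\{A_i=a\}$ for the count of answer $a$.

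\textbf{Step 1: reduce to pairwise comparisons.} Suppose plurality returns some $v\in\mathcal V$. Then $v$ must have appeared, so $\mu(v)>0$ and $v\in\mathcal V_+$. Since $v$ is a plurality answer, $N_v\ge N_a$ for every $a$, and in particular $N_v\ge N_w$. This holds whatever deterministic tie rule is used, because any tie rule can only return an answer attaining the maximal count. Hence
\[
\{\operatorname{plurality}(A_{1:K})\in\mathcal V\}\subseteq\bigcup_{v\in\mathcal V_+}\{N_v\ge N_w\}.
\]

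\textbf{Step 2: bound each event.} Fix $v\in\mathcal V_+$ and set $X_i=\mathbf 1\{A_i=v\}-\mathbf 1\{A_i=w\}$. The $X_i$ are i.i.d., lie in $[-1,1]$, and have mean $\mu(v)-\mu(w)\le-\delta_\mu$ by hypothesis. Therefore
\[
\{N_v\ge N_w\}=\Bigl\{\textstyle\sum_i X_i\ge 0\Bigr\}\subseteq\Bigl\{\textstyle\sum_i (X_i-\mathbb E X_i)\ge K\delta_\mu\Bigr\}.
\]
Hoeffding's inequality with range length $2$ bounds this by
\[
\exp\!\bigl(-2(K\delta_\mu)^2/(4K)\bigr)=\exp(-K\delta_\mu^2/2).
\]

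\textbf{Step 3: union bound.} Summing the Step 2 bound over the $|\mathcal V_+|$ events in Step 1 gives the claimed inequality.

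The main obstacle is only bookkeeping. The tie-rule independence needs the observation that every tie rule outputs an argmax of the counts. The restriction to $\mathcal V_+$ relies on the fact that zero-mass answers never occur almost surely. Getting the Hoeffding constant right for range $[-1,1]$ is the only computation.
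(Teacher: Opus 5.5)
Your proposal is correct and matches the paper's proof step for step. Both arguments reduce a valid plurality output to the union of the events $N_v\ge N_w$ over $v\in\mathcal V_+$, apply Hoeffding's inequality with range length two to $X_i=\mathbf{1}\{A_i=v\}-\mathbf{1}\{A_i=w\}$, and finish with a union bound; you also make explicit two points the paper leaves implicit, namely that any tie rule returns an argmax of the counts and that zero-mass answers almost surely never occur.
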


\noindent\emph{Proof sketch.}
For each supported valid answer $v$, compare its empirical count with the count of $w$. Hoeffding's inequality bounds the probability that $v$ ties or exceeds $w$ by $\exp(-K\delta_\mu^2/2)$. A union bound over supported valid answers gives the result.

A concrete example makes this reversal explicit. Let valid answers carry total mass $\rho=0.6$, divided evenly among $M=4$ answer strings, and let one invalid answer carry the remaining mass $0.4$. Each valid string then has mass $0.15$, so exact plurality converges to the invalid answer even though a sampled answer is valid with probability $0.6$. More generally, this reversal occurs whenever $M>\rho/(1-\rho)$. By contrast, an ideal validity-separating verifier succeeds whenever at least one valid candidate appears, with probability $1-(1-\rho)^K$. This idealized comparison isolates the aggregation mechanism.

\paragraph{Why imperfect graph recovery can still suffice.}
Constructed-graph verification need not recover every edge. It must preserve the relations that determine the queried predicate. For a proposed backdoor set, those relations are treatment exclusion, descendant status, and $d$-separation in the mutilated graph. This creates a distinction: an error on a query-irrelevant edge can leave the decision unchanged, whereas one omitted confounding edge can reverse it even when global edge F1 is close to one. The Supplementary document formalizes this distinction.

%% file: Chapters/4_Empirical_study.tex
\section{Experiments}

\paragraph{Protocol and evaluation questions.}
The experiments ask four questions: whether validity selection beats strong non-oracle scorers on identical pools, whether gains scale with $K$, whether they transfer to external structures and another formal domain, and what changes when the graph must be constructed from text. The eight policies in Table~\ref{tab:policies} span Qwen2.5 at 7B, 14B, and 32B parameters \citep{yang2024qwen25}, 4-bit NF4 variants at 7B and 32B \citep{dettmers2023qlora}, and Mistral NeMo 12B \citep{mistral2024nemo}. The 7B SFT policy is trained on the six-slot trace contract, and the verifier-GRPO policy uses the same contract with a verifier-derived reward. The Supplementary document summarizes the training and evaluation configuration; exact model and adapter revisions, NF4 settings, reward definitions, prompts, seeds, and run manifests will accompany the public Code and Data Package.
Unless stated otherwise, sampling uses temperature 0.8 and $K=8$, and intervals are paired bootstraps clustered by problem. Reported gains are calculated before displayed estimates are rounded. External graphs and logic puzzles are zero-shot with respect to the training generator.

These questions need a testbed where multiplicity is genuine, and validity is computable. Our primary benchmark is therefore the typed clean core of CLEAR's find-one-valid tasks \citep{chen2024clear}: the 126 of the 480 CLEAR find-one-valid items whose published task admits multiple valid objects and for which a graph-class-compatible predicate is implemented. This inclusion rule is fixed from the published task type and graph semantics, before any candidate is generated and therefore before any model output, correctness outcome, or selector score exists. A selected object is accepted whenever it satisfies the predicate, including when it differs from the dataset's listed example. Supplied-graph CLEAR isolates the selection problem under exact graph semantics. On supplied-graph CLEAR the validity check and the grading criterion apply the same predicate by construction. The graph-from-text, DoVerifier, K\&K, and exact-ground-truth ATE studies, where verifier and evaluator are distinct, carry that separation. Accuracy is measured at the problem-seed level unless a result is explicitly labeled by trace or item.

\paragraph{Many-satisfier queries across policies.}
We begin on the primary benchmark: the fixed checker raises the selected-answer point estimate for all eight policies on the common set of 126 problems (Table~\ref{tab:policies}). The set-aware Jaccard medoid chooses the candidate set with the largest average overlap with the pool. Relative to this stronger voting baseline, seven of eight confidence intervals exclude zero. The Qwen-7B base estimate is positive but its interval includes zero. Policies trained or prompted to emit the trace contract more reliably provide more usable verifier signal, while the decision procedure remains unchanged. In the audited clean-core subset, 11 of the 21 graph-valid answers CALVER selected (52.4\%) differ from the dataset's listed example while satisfying the requested predicate. Exact-match voting treats these as separate modes, and the verifier recognizes their shared validity.

\begin{table*}[t]
\centering
\small
\setlength{\tabcolsep}{6pt}
\begin{tabular}{@{}lrrrrrr@{}}
\toprule
Policy & $n$ & First & Exact plurality & Set medoid & CALVER & $\Delta$ vs.\ medoid (pp; 95\% CI) \\
\midrule
Qwen 7B base          & 360 & 23.6 & 31.4 & 32.2 & 36.7 & $+4.4$~~[$-0.8$, 9.7] \\
Qwen 7B SFT           & 377 & 23.6 & 28.9 & 31.6 & 43.8 & $+12.2$~[6.9, 17.8] \\
Qwen 7B verifier-GRPO & 374 & 27.5 & 33.2 & 35.3 & 46.0 & $+10.7$~[5.6, 15.8] \\
Qwen 7B NF4           & 372 & 14.0 & 17.7 & 20.4 & 38.4 & $+18.0$~[13.3, 23.0] \\
Mistral NeMo 12B      & 378 & 21.7 & 27.5 & 28.3 & 47.4 & $+19.0$~[14.6, 24.1] \\
Qwen 14B              & 378 & 41.8 & 45.8 & 44.4 & 68.5 & $+24.1$~[18.8, 29.6] \\
Qwen 32B              & 378 & 47.1 & 48.7 & 47.4 & 62.4 & $+15.1$~[10.1, 20.6] \\
Qwen 32B NF4          & 378 & 44.4 & 51.6 & 52.1 & 61.9 & $+9.8$~~[5.6, 14.6] \\
\bottomrule
\end{tabular}
\caption{Selection accuracy (\%) on the CLEAR clean core at $K=8$. All policies use the
same 126 source problems; $n$ is the number of available problem--seed units. The final
column reports CALVER's percentage-point gain over the set medoid. Brackets are paired
95\% confidence intervals, bootstrapped over resamples clustered by source problem.}
\label{tab:policies}
\end{table*}

\paragraph{Where does CALVER repair plurality's errors?}
These gains arise exactly where the fragmentation model would predict. The predicted spoiler effect is common in the observed pools: among 576 clean-core pools containing at least one valid candidate, an invalid answer is the unique plurality in 226 (39\%). CALVER selects a valid object in 68.6\% of those cases. Across all 576 pools, CALVER repairs 197 plurality errors and changes 54 correct selections to errors, a ratio of 3.6 repairs per new error with a net gain of 24.8 percentage points (full decomposition in the Supplementary document).

\paragraph{Does the advantage survive matched pools and larger $K$?}
We compare every selector on the 1,111 problem-seed units for which all scores are present. The generic comparators are Skywork Reward V2 8B \citep{liu2025skywork}, a reference-free LLM judge that receives the same graph, query, and trace, and the policy's own binary confidence score. Each scalar scorer selects the earliest candidate attaining its maximum score. Exact plurality and the set medoid use their predeclared answer-level tie rules. 

CALVER reaches 42.1\% (Figure~\ref{fig:mainresults}); the closest generic comparator reaches 30.5\%. Paired problem-clustered gains are 11.6 percentage points over the reward model [8.2, 15.0], 14.8 over the judge [11.4, 18.2], 12.1 over model confidence [8.6, 15.6], and 11.3 over exact plurality [7.8, 14.9]. A structure-only selector that retains parsing, binding, and format checks while removing semantic validity reaches 23.5\%. The 18.6-point difference [15.6, 21.7] attributes the gain to the executable causal predicate: checking format alone confers no advantage. Scaling the judge does not recover the signal as a Qwen2.5-72B judge given the same graph ties plurality (+0.4 [-2.0, 3.0]).

As an independent formal check, candidate-target DoVerifier raises first-sample accuracy from 49.6\% to 80.0\% on 240 gated causal-identification items without receiving a reference expression. This is an independent implementation of a candidate-wise selection principle. Results on each query subset (many-satisfier and per-network) are reported in the Supplementary document.

\begin{figure*}[htb]
\centering
\includegraphics[width=\textwidth]{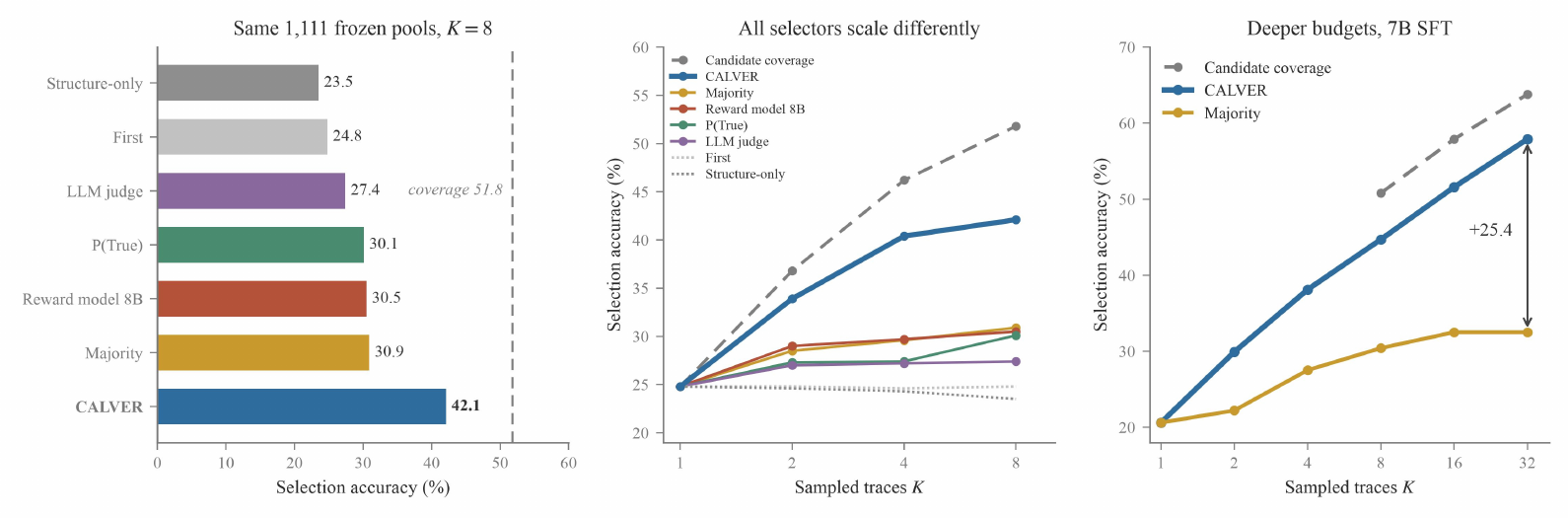}
\caption{\textbf{Left:} non-oracle selectors on the 1,111 CLEAR units for which every scorer is available. \textbf{Center:} prefix scaling on the same frozen eight-candidate pools. \textbf{Right:} deeper scaling on the Qwen-7B SFT clean core. The dashed gray curve labelled candidate coverage is the fraction of pools with at least one correct candidate, computed from held-out labels after generation. It is a diagnostic ceiling and is unavailable to all selectors.}
\label{fig:mainresults}
\end{figure*}

Returning to the frozen CLEAR pools, the gap grows with the sampling budget over the tested prefixes. On the SFT policy, CALVER rises from 20.6\% at $K=1$ to 29.9, 38.1, 44.7, 51.6, and 57.9\% for $K=2,4,8,16,32$. Exact plurality reaches 32.5\% and does not improve from $K=16$ to $K=32$. The gap therefore grows from 7.7 percentage points at $K=2$ to 25.4 points at $K=32$ [18.3, 32.3]. Every point is a prefix of one frozen $K=32$ pool, generated separately from the $K=8$ pool in Table~\ref{tab:policies}. The verifier-GRPO and unadapted-base policies show the same qualitative pattern, with $K=32$ gaps of 23.8 and 10.3 points. This behavior is consistent with the fragmentation mechanism in Theorem~\ref{thm:frag}.

A process reward model distilled from the checker's scores does not reproduce its selection. On the frozen ATE pools, it selects no better than plurality and does not improve with $K$, while the symbolic checker rises toward the candidate-coverage ceiling (Supplementary document).

\paragraph{External transfer.}
Table~\ref{tab:transfer} separates supplied-graph transfer, graph-from-text transfer, and a formal logic task. The Bayesian-network structures come from the \texttt{bnlearn} repository \citep{scutari2022bnrepository}. The node-renaming audit and prose-recovery measurements limit reliance on memorized network names. CausalGraph2LLM contributes mediator and intervention tasks across textual, JSON, adjacency, GraphML, and Graphviz encodings \citep{sheth2025causalgraph2llm}. The K\&K benchmark provides dynamically generated logic puzzles \citep{xie2025memorization}. The policy emits a propositional formalization, and a truth-table engine instantiates the same validity-selection principle.

\begin{table}[t]
\centering
\footnotesize
\setlength{\tabcolsep}{1.5pt}
\begin{tabular}{
  >{\hangindent=1em\hangafter=1\relax}p{.39\columnwidth}
  rrr
}
\toprule
Evaluation & Plurality & CALVER & \shortstack{Gain\\(pp)} \\
\midrule
CLEAR clean core, SFT ($n=377$) & 28.9 & 43.8 & $+14.9$ \\
\texttt{bnlearn}, supplied DAG, 10 nets ($n=410$) & 39.8 & 56.8 & $+17.1$ \\
L1 edge-stated text, 9 nets ($n=132$) & 50.8 & 75.0 & $+24.2$ \\
L2 mechanism-explicit text, 8 nets ($n=246$) & 31.3 & 45.5 & $+14.2$ \\
L3 naturalistic text, 8 nets ($n=273$) & 33.3 & 50.9 & $+17.6$ \\
CausalGraph2LLM, five encodings ($n=4{,}800$) & 55.6 & 58.3 & $+2.8$ \\
K\&K, 3 people ($n=180$) & 50 & 77 & $+27$ \\
K\&K, 4 people ($n=180$) & 26 & 63 & $+37$ \\
K\&K, 5 people ($n=180$) & 22 & 38 & $+16$ \\
\bottomrule
\end{tabular}
\caption{Non-oracle selector accuracy (\%) on independently sourced benchmarks and structures. The symbolic checker is matched to the task: causal graph predicates for the causal rows and truth-table consistency for K\&K. $n$ counts problem--seed units, and L2/L3 use three seeds. Gains are CALVER minus plurality in percentage points.}
\label{tab:transfer}
\end{table}

The \texttt{bnlearn} ladder asks what happens to selection when the same repository networks must be reconstructed from increasingly indirect prose. The supplied-graph rows anchor the comparison: the pooled gain is 17.1 percentage points [13.4, 20.7]. Rendering the same structures as prose forces the trace to reconstruct a graph before proposing an adjustment set, and lets us vary how directly the structure is stated. At L1, the scenario names each edge, and extraction is near transcription. Exact graph recovery is 83\% and the gain is 24.2 percentage points [17.4, 31.8]. At L2 the text describes mechanisms explicitly, and at L3 it is naturalistic narrative with distractors. Exact recovery then falls to 37\% and 34\%, yet the gain remains 14.2 percentage points [8.5, 20.7] and 17.6 percentage points [11.4, 24.2]. Thus, on this round-trip-filtered \texttt{bnlearn} corpus, CALVER continues to improve over plurality even when exact whole-graph recovery falls below 40\%. The query-local analysis offers a mechanism for this pattern: a trace may preserve the relations needed for a decision without recovering every edge.

The round-trip faithfulness filter is fixed before evaluation and retains an item only when re-extracting the prose reproduces the source subgraph. It uses neither model answers nor selector scores, and every selector sees the same admitted corpus. The filter retains nine networks at L1 and eight at L2/L3. On the latter two levels, a structure-only control reaches 29.3\% and 30.8\%, below the corresponding plurality baselines.

The final rows of Table~\ref{tab:transfer} leave causal graphs entirely, turning to K\&K puzzles: from each speaker's statements, decide who always lies and who always tells the truth. This analysis shows that the selection mechanism is not specific to causal axioms. Substituting the domain's own decision procedure, truth-table consistency in place of the graphical predicates, reproduces the gain over plurality at all three puzzle sizes, with no task-specific fine-tuning.

\paragraph{When should one extract a single graph instead?}
When transcription is reliable, a graph stated in text can instead be extracted once and passed to an exact solver. We compare this route with candidate-wise verification on a separate construction suite of eight graph motifs rendered as edge statements (L1), mechanism descriptions (L2), and naturalistic narratives with distractors (L3). Each level contains 240 problem-seed units.

\begin{figure}[t]
\centering
\includegraphics[width=\columnwidth]{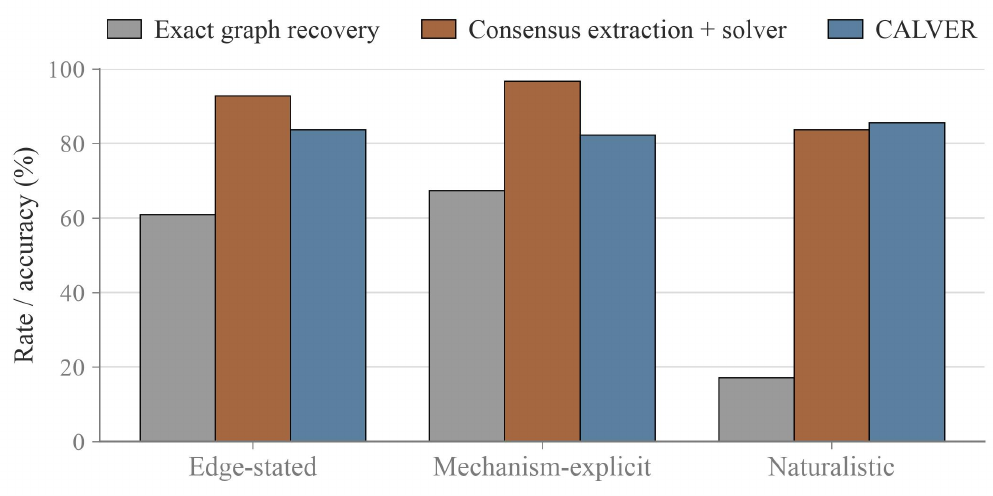}
\caption{Exact solving after consensus graph extraction versus candidate-wise verification
on the 153 executable units per presentation level.}
\label{fig:crossover}
\end{figure}

Across all 240 units per level, exact graph recovery falls from 39\% at L1 to 6\% at L3, while CALVER's gain over plurality remains between 50 and 52 percentage points. On the common 153-unit executable stratum, where recovery is necessarily higher because parseability is a precondition, extraction followed by exact solving reaches 92.8\% and 96.7\% at L1 and L2, compared with 83.7\% and 82.4\% for candidate-wise verification. At L3, the observed accuracies are similar: 83.7\% for extract-then-solve and 85.6\% for candidate-wise verification (Figure~\ref{fig:crossover}). A practical rule follows: when transcription is reliable, extract one graph and solve it directly; when readings of the text diverge, no single extraction can be trusted, and verifying every candidate is the better option.

\paragraph{Mechanism and ATE audits.}
Table~\ref{tab:mechanism-audits} isolates the source of the selection signal. Removing semantic validity eliminates the gain, while increasingly severe graph corruption produces an ordered reduction. Tie-rule and node-renaming audits show that the result is not explained by candidate ordering or lexical identity. Randomized property tests additionally compare the graph primitives with an independent NetworkX implementation \citep{hagberg2008exploring}. 

\begin{table}[htb]
\centering
\footnotesize
\renewcommand{\arraystretch}{1.1}
\begin{tabular}{@{}>{\raggedright\arraybackslash}p{.24\columnwidth} >{\raggedright\arraybackslash}p{.40\columnwidth} >{\raggedright\arraybackslash}p{.26\columnwidth}@{}}
\toprule
Control & Result & Conclusion \\
\midrule
\textit{(1)} Validity check removed (structure-only) & Falls below first sample and plurality (CLEAR; \texttt{bnlearn} L2/L3) & Format alone confers no gain \\
\textit{(2)} Ties resolved to worst candidate & $\le 0.6$\,pp shift on seven of eight policies; exception: Qwen 32B NF4 & Not a tie artifact \\
\textit{(3)} Consistent node renaming & All 19{,}176 scores unchanged & No lexical leakage \\
\textit{(4)} Graph corruption (10\% del.\ / 10\% rev.\ / 25\% mixed / full) & Gain vs.\ plurality: $14.7\!\to\!13.7/11.5/10.4/6.6$\,pp & Structure drives selection; degrades monotonically \\
\textit{(5)} Schema adaptation removed & Gain on control subset: 5.0\,pp base; 10.4--18.5\,pp adapted & Adapter mediates usable signal \\
\bottomrule
\end{tabular}
\caption{Mechanism controls. Each row removes or perturbs one ingredient of the selection signal on otherwise identical configurations.}
\label{tab:mechanism-audits}
\end{table}

The strict ATE audit applies the treatment-exclusion and decision-margin guards in addition to the deployed six-component score. Among 2,376 evaluated traces, 585 attain the maximum deployed score. The 51 incorrect maximum-score traces all violate at least one strict guard: 45 lie within the declared decision margin and six place the treatment in their proposed adjustment set. No incorrect trace passes the strict ATE certificate. On the 249-problem declared-margin stratum, CALVER gains 13.7 percentage points over exact plurality [7.2, 20.1]. Additional calibration and graph-class results are in the Supplementary document.

%% file: Chapters/5_Conclusion.tex
\section{Limitations \& Conclusion}

CALVER changes the unit of aggregation from answer-string frequency to evidence that a candidate belongs to a task-defined validity class. CALVER's gain concentrates where correct answers fragment. Among the 576 pools that contain at least one valid candidate, it repairs 197 plurality errors and introduces 54, for a net gain of 24.8 percentage points. Across eight policies, published graph families, graph-from-text settings, and a truth-table logic task, the same candidate-wise principle raises the selected-answer point estimate. The widening observed gap through $K=32$ is consistent with the validity-fragmentation mechanism in Theorem~\ref{thm:frag}.

CALVER applies wherever the validity predicate can be executed, covering the graphical criteria used here and the truth-table check used for Knights and Knaves. Its advantage is regime-bound. When answers are near-unique, best-of-$K$ already suffices, and when text-to-graph extraction is reliable, extracting once and solving directly is the better tool. CALVER is the right choice in between, where many answers are valid and no single extraction is dependable.